\newcommand{\E}{\mathbb E}
\newcommand{\abs}[1]{\left| #1 \right|}
\newcommand{\norm}[1]{\left\| #1 \right\|}
\newcommand{\pars}[1]{\left( #1 \right)}
\newcommand{\sqbr}[1]{\left[ #1 \right]}
\newcommand{\SqrNorm}[1]{\norm{#1}^2}
\newcommand{\Exp}[1]{\E \sqbr{#1}}
\newcommand{\ExpSqrNorm}[1]{\E \norm{#1}^2}
\newcommand{\ExpArg}[2]{\E_{#1} \sqbr{#2}}
\newcommand{\ExpArgSqrNorm}[2]{\E_{#1} \norm{#2}^2}
\newcommand{\ExpArgInnerProd}[2]{\E_{#1} \left\langle #2 \right\rangle}
\newcommand{\vc}{\mathbf c}
\newcommand{\vx}{\mathbf x}
\newcommand{\vy}{\mathbf y}
\newcommand{\x}[1]{\vx_{#1}}
\newcommand{\C}[2]{\vc^{(#1)}_{#2}}
\newcommand{\tC}[2]{\tilde{\vc}^{(#1)}_{#2}}
\newcommand{\f}[2]{f^{(#1)}_{#2}}
\newcommand{\G}[2]{G^{(#1)}_{#2}}
\newcommand{\step}{\eta}
\newcommand{\repr}[2]{r^{(#1)}_{#2}}
\newcommand{\dist}{\rho}
\DeclareMathOperator*{\avg}{avg}
\newtheorem{assumption}{Assumption}
\newtheorem{lemma}{Lemma}
\newtheorem{theorem}[lemma]{Theorem}
\newcommand{\algcomment}[1]{\texttt{\color{blue}// #1}}
\newcolumntype{P}[1]{>{\centering\arraybackslash}p{#1}}
\newcolumntype{M}[1]{>{\centering\arraybackslash}m{#1}}
\newcommand*\circled[1]{\tikz[baseline=(char.base)]{
    \node[shape=circle,draw,inner sep=0.3pt] (char) {#1};}}
\newif\ifcomm
\newcommand{\minghao}[1]{\textbf{\color{red}Minghao: #1}}
\newcommand{\minlan}[1]{\textbf{\color{purple}Minlan: #1}}
\newcommand{\dima}[1]{\textbf{\color{blue}Dima: #1}}
\newcommand{\rana}[1]{\textbf{\color{teal}Rana: #1}}
\newcommand{\minghao}[1]{}
\newcommand{\new}[1]{}
\newcommand{\minlan}[1]{}
\newcommand{\dima}[1]{}
\newcommand{\rana}[1]{}
\newcommand{\sysname}{\textsc{Fielding}\xspace}
\newif\ifshrink
\newcommand{\shrink}[1]{#1}
\newcommand{\shrink}[1]{}
\title{\sysname: Clustered Federated Learning with Data Drift}
\author[1]{Minghao Li}
\author[2]{Dmitrii Avdiukhin}
\author[1]{Rana Shahout}
\author[3]{Nikita Ivkin}
\author[4,5]{Vladimir Braverman}
\author[1]{Minlan Yu}
\affil[1]{Harvard University}
\affil[2]{Northwestern University}
\affil[3]{Amazon}
\affil[4]{Johns Hopkins University}
\affil[5]{Google}
\begin{document}

\maketitle

\begin{abstract}

Federated Learning (FL) trains deep models across edge devices without centralizing raw data, preserving user privacy. However, client heterogeneity slows down convergence and limits global model accuracy. Clustered FL (CFL) mitigates this by grouping clients with similar representations and training a separate model for each cluster. In practice, client data evolves over time -- a phenomenon we refer to as \emph{data drift} -- which breaks cluster homogeneity and degrades performance. Data drift can take different forms depending on whether changes occur in the output values, the input features, or the relationship between them. We propose \sysname, a CFL framework for handling diverse types of data drift with low overhead.
\sysname detects drift at individual clients and performs selective re-clustering to balance cluster quality and model performance, while remaining robust to malicious clients and varying levels of heterogeneity. Experiments show that \sysname improves final model accuracy by 1.9–5.9\% and achieves target accuracy 1.16x–2.23x faster than existing state-of-the-art CFL methods.
\end{abstract}

\section{Introduction}
\label{sec:introduction}

Federated Learning (FL) is a distributed machine learning paradigm that enables multiple clients to collaboratively train a shared model while keeping raw data local~\citep{google_fl, fldecentralized}.
FL has gained attention due to its privacy-preserving nature, as it avoids transmitting sensitive data to a central server~\citep{flprivacy, papaya}.
This makes it particularly appealing in applications such as mobile text prediction~\citep{googlekeyboard}, energy forecasting~\citep{energyforecast}, and medical imaging~\citep{medicalimage}.

In practice, FL involves clients with diverse hardware, usage patterns, and data sources, leading to variation in both dataset sizes and distributions: known as data heterogeneity~\citep{dataheteroiniot, heteroswitch}. This heterogeneity slows model convergence and degrades model performance~\citep{li_ConvergenceFedAvg_2020, noniddwithsharing, robustflfromnoniid}. Clustered FL (CFL) addresses this by grouping clients with similar data characteristics and training a separate model per cluster~\citep{ifca, flexcfl, auxo, federateddistributeddrift}, improving learning efficiency and accuracy by reducing intra-cluster heterogeneity.

However, real-world deployments face data drifts over time.
Data drift typically falls into three categories~\citep{mlsysdesign}: (1) \emph{label shift}, where the output categories frequencies change but the characteristics associated with each category remain stable~\citep{labelshift_detecting, unified_labelshift}; (2) \emph{covariate shift}, where the distribution of the input features changes but the feature-label relationship remains constant~\citep{onlinefl_dynamicregret, matchmaker}; (3) \emph{concept shift}, where the underlying input-output relationship changes, rendering a model trained on past data inaccurate on new data~\citep{continualfl, federateddistributeddrift}.

As data drift accumulates, intra‐cluster heterogeneity increases, and clusters can become as diverse as the full client set~\citep{federateddistributeddrift, unifying_datashift}. Effective drift handling must address two key challenges: \textbf{adapting to varying drift magnitudes} while maintaining \textbf{practical efficiency}, and \textbf{supporting diverse drift types}.

\minlan{overall, check maybe 2.1 can be a bit simplified to avoid redundancy?}

\minlan{do you mean many drifts or both many and a few drifts/varying drift magnitudes? The arguments here should be aligned with 2.1.} 
\minghao{please see the new "first challenge" paragraph}

The first challenge arises when drifts have varying magnitudes. When many clients experience drift, re-clustering based on outdated average weights or gradients becomes invalid, often requiring global re-clustering. FlexCFL~\citep{flexcfl} and IFCA~\citep{ifca} attempt to adapt incrementally by moving clients one at a time while keeping the number of clusters fixed. They, however, struggle under large-scale shifts as they reassign drifted clients using clusters' current average gradient or weight, which shift significantly after massive reassignments.
Global re-clustering reinitializes all clusters to reflect current data, but incurs significant communication and computation costs: clients might download multiple model replicas and upload new gradients; moreover, the newly formed clusters typically suffer an accuracy dip. Hence, while global re-clustering works for all drift magnitudes, incremental reassignment is more desirable under small drifts for practical reasons. FedDrift~\citep{federateddistributeddrift} adopts global re-clustering and pays an amplified cost by having each client train on all cluster models. Auxo~\citep{auxo} and FedAC~\citep{fedac} reduce overhead by only reassigning and adjusting cluster counts with clients selected for training, but ignore drifted clients that are not selected, reducing overall effectiveness (see Section~\ref{subsec:combine_perclientmove_and_global}).

The second challenge is to detect and adapt to different types of drift. FL frameworks rely on client representations---client-side information used for clustering---to capture drifts. 
The label distribution vector is a common representation that reflects the label shift. Another is the input embedding: feature vectors derived from a client’s local data using a neural network, then aggregated (e.g., averaging) to represent the input distribution. When the input–output mapping remains stable, input feature shifts often correlate with label distribution shifts so that both representations can produce similar clustering results. However, neither captures changes in the input–output relationship itself -- concept drift. Addressing concept drift typically requires loss-based representations, such as loss values, gradients, or gradient directions. For example, \citet{CFL_modelagnostic} clusters clients based on gradient directions to identify such shifts.
Each representation has trade-offs: Label and embedding vectors capture only marginal distributions, while gradient-based features require additional computation, are sensitive to training noise, and evolve with model updates (Table~\ref{table:cluster_gradient} shows that gradient-based clustering generates better clusters as training progresses and the model becomes more stable). Selecting or combining these representations enables more reliable detection in all types of drift (see \Cref{app:representation}).
\minlan{This part on different drift using different metrics are too long. maybe move some to 2.1. instead, here, say a bit more on why it's hard to use different metrics, for clustering and for convergence}
\minghao{I prefer keeping the information that's already here; we already stated the problem with different metrics?}

In this work, we present \sysname, a CFL framework that handles multiple types of data drift with low overhead. \sysname makes two key design choices: it combines per-client adjustment with selective global re-clustering, adapting to varying drift magnitudes without incurring the full cost of global re-clustering at every step; and it re-clusters all drifted clients using lightweight, drift-aware representations, keeping client-side computation and communication costs low while enabling efficient information collection from all clients. Our theoretical framework supports all representation choices and provides per-round utility guarantees along with convergence bounds, allowing \sysname to effectively handle all three types of data drift.


To demonstrate practicality, we extended the FedScale engine~\citep{fedscale} to support streaming data and built a \sysname prototype on top. We evaluated our framework on four image streaming traces with up to 5,078 clients experiencing various drift patterns. In these scenarios, \sysname improves the final accuracy by up to 5.9\% and reaches target accuracy as much as 2.23× faster than standard CFL approaches. Moreover, we show that \sysname flexibly accommodates different drift types through pluggable client representations 
(e.g., label distribution vectors, input embeddings, and gradients), integrates seamlessly with a range of client selection and aggregation schemes, and remains robust against malicious clients and varying degrees of heterogeneity.


\section{\sysname}
\label{sec:design_decisions}
\sysname is a CFL framework that handles multiple types of data drift with low overhead. Managing data drifts effectively requires identifying clients that need reassignment, determining their appropriate clusters, and collecting information to support drift detection and client movement. We begin by motivating the challenges of drift handling, followed by the design decisions that shape \sysname, and then provide an overview of the system.

\subsection{Motivation and design choices}
\label{subsec:combine_perclientmove_and_global}

We analyze the limitations of existing clustering strategies under real-world drift to motivate the design of \sysname. Our analysis uses the Functional Map of the World (FMoW) dataset~\citep{fmow}, containing 302 clients (one per unique UTM zone) with time-stamped satellite images labeled by land use (e.g., airport, crop field). Two training rounds correspond to one day. More details on FMoW appear in Section~\ref{sec:evaluation}. To quantify intra-cluster heterogeneity, we use \emph{mean client distance}~\citep{oort}: for each client, we compute the average pairwise L1 distance between its data distribution and that of clients in the same cluster. We then take the mean across clients.
\dima{Handle other representations?}

\begin{figure}
\begin{minipage}[b]{0.45\textwidth}
\begin{figure}[H]
    \centering 
    \includegraphics[width=\linewidth]{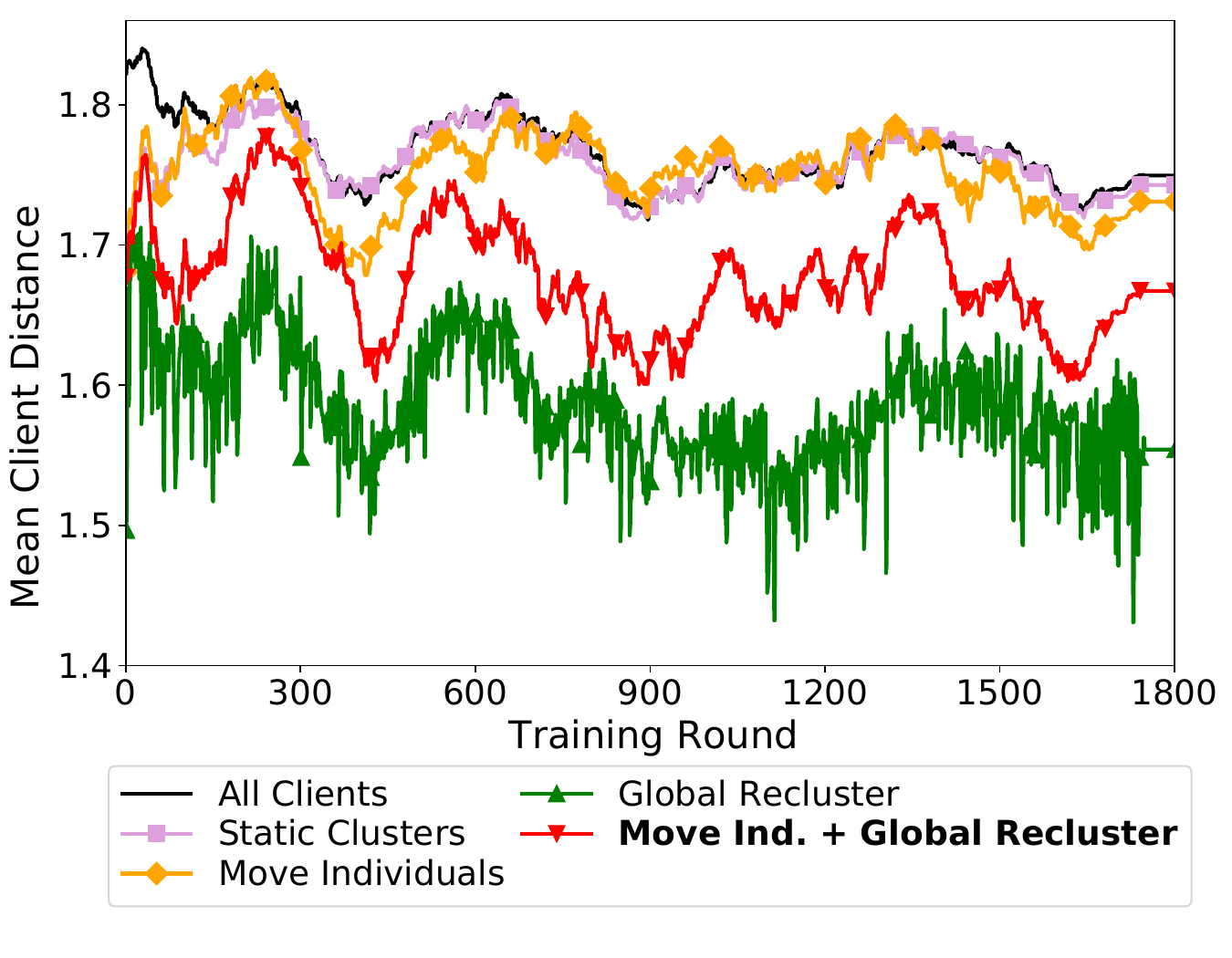}
    \caption{Client heterogeneity of the global set and label distribution-based clusters.
    }
    \label{fig:fmow_intra_cluster_hetero}
\end{figure}
\end{minipage}
\hfill
\begin{minipage}[b]{0.54\textwidth}
\begin{figure}[H]
    \centering
    \begin{subfigure}[b]{0.49\textwidth}
        \centering
        \includegraphics[width=\linewidth]{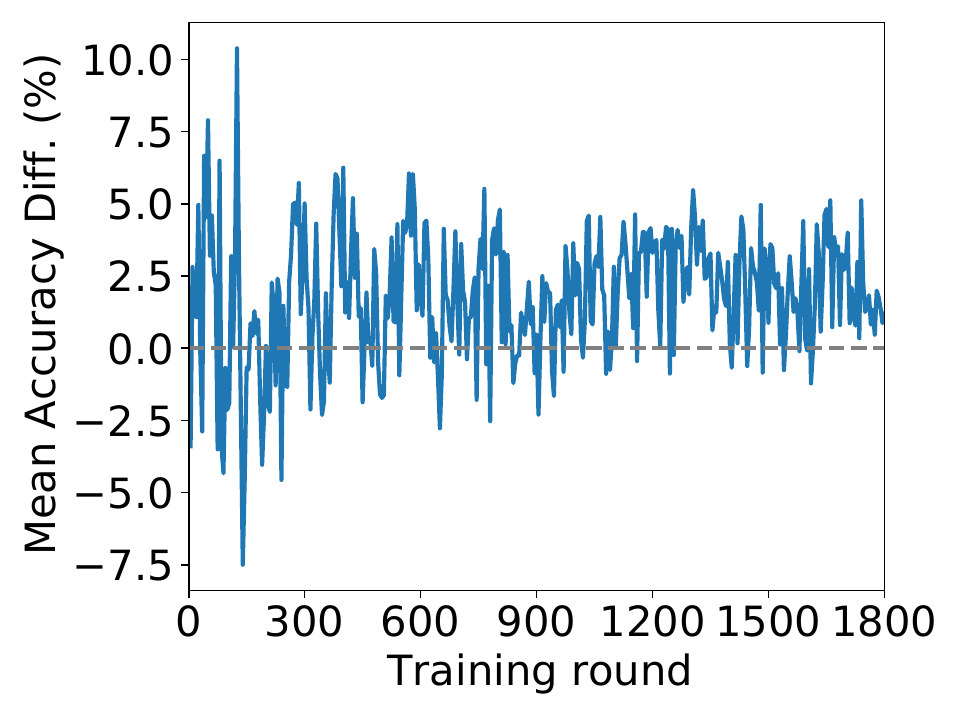}
        \caption{Selective vs. default global re-clustering (positive = selective better).
        }
        \label{fig:global_vs_selective}
    \end{subfigure}
    \begin{subfigure}[b]{0.49\textwidth}
        \centering
        \includegraphics[width=\linewidth]{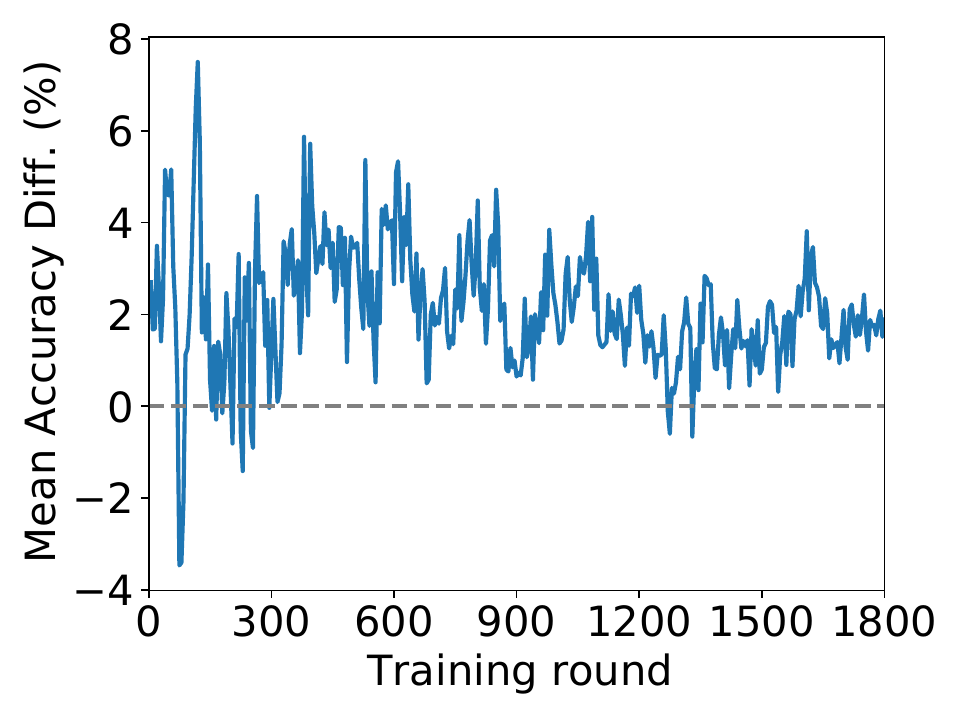}
        \caption{Global re-clustering vs. Selected clients re-clustering (positive = global better). 
        }
        \label{fig:all_vs_selected}
    \end{subfigure}
    
    \caption{Accuracy difference between different re-clustering approaches.}
    \label{fig:accuracy_difference}
\end{figure}
\end{minipage}
\end{figure}

\textit{Per-client adjustment fails under many drift.}
Per-client adjustment methods, such as IFCA~\citep{ifca} and FlexCFL~\citep{flexcfl}, maintain a fixed number of clusters and reassign drifted clients individually based on gradients or model weights. As mentioned in \Cref{sec:introduction}, such assignments become unstable and lead to poor clustering when many clients drift simultaneously.\minghao{removed: This approach becomes ineffective when many clients drift simultaneously. In these cases, the basis for reassignment, the average weights or gradients of each cluster, shifts rapidly, making assignments unstable and leading to poor clustering.} Figure~\ref{fig:fmow_intra_cluster_hetero} (“Move Individuals”) highlights this behavior. Between rounds 600–730, a significant drift event increases intra-cluster heterogeneity, and from rounds 740–1340, per-client adjustment fails to restore effective clustering. In fact, heterogeneity exceeds that of the unclustered client set.

\textit{Global re-clustering is unstable during small drifts.}
Global re-clustering resets all client assignments to reflect the current data, and it produces the lowest heterogeneity in our experiments (Figure~\ref{fig:fmow_intra_cluster_hetero}, “Global Recluster”). However, due to the random initialization of k-means~\citep{kmeansplusplus, kmeans_overview}, the resulting clusters can vary considerably across rounds, even under small drifts. Combined with cluster warm-up, this leads to test accuracy fluctuations. Figure~\ref{fig:global_vs_selective} shows the accuracy difference between our selective re-clustering approach (described in Section~\ref{sec:overview}) and a baseline that triggers global re-clustering after every drift event. The baseline exhibits unstable accuracy, sometimes matching ours but falling short by up to 5\%.

\textbf{Design choice: combine per-client adjustment with selective global re-clustering.}
Per-client adjustment breaks in highly dynamic environments where many clients experience drift. Meanwhile, global re-clustering can harm model accuracy by destabilizing cluster assignments and slowing convergence, particularly during minor drifts.
To balance these trade-offs, \sysname combines both strategies: starting with individual client migration and triggering global re-clustering only when needed (see Section~\ref{sec:overview} for details). As shown by the red curve (“Move Ind.\,+\,Global Recluster”) in Figure~\ref{fig:fmow_intra_cluster_hetero}, this approach maintains low intra-cluster heterogeneity throughout training (see Section~\ref{sec:evaluation}).

\textit{Clustering only selected clients reduces accuracy.}
Methods such as Auxo~\citep{auxo} and FedAC~\citep{fedac} use training outputs—e.g., gradients or model parameters—from selected clients to re-cluster them without any additional cost. However, unselected clients do not provide such information, making it unclear whether they have drifted or where they should be reassigned to. As outlined in Appendix~\ref{subsec:CFL}, typical FL settings sample only a subset of clients in each round. When unselected clients drift but remain in their previous clusters, they may receive models trained on data with different distributions, reducing accuracy. Figure~\ref{fig:all_vs_selected} compares the mean test accuracy when re-clustering only selected clients versus all drifted clients in each round. Re-clustering all drifted clients yields a 1.5\%–4\% accuracy gain across most rounds—a meaningful improvement given the final FMoW accuracy of 52.4\%. These results show that re-clustering only selected clients produces imperfect clusters containing misaligned clients.

\textit{Global re-clustering based on training outputs incurs high overheads.}  
Handling all drifted clients requires input from every client, not just those selected for training. Depending on the representation, this method might incur significant overhead. 
\minghao{removed: One approach is to train a global model on selected clients and send it to all clients for local gradient computation.} In FlexCFL~\citep{flexcfl}, clients train the global model locally and report gradients to the coordinator. In our experiments using ResNet-18~\citep{resnet} on FMoW, clients spent on average 116.7 seconds for model download and gradient upload, and an additional 50.4 seconds running forward and backward passes on local data.

\textbf{Design choice: efficient re-clustering of all drifted clients across drift types.} To handle drift comprehensively and efficiently, \sysname re-clusters all drifted clients, regardless of whether they were selected for training. Instead of relying on training outputs, it uses lightweight representations that can be collected from all clients with minimal overhead. To support different drift types while maintaining low cost, \sysname allows configurable client representations: it uses label-distribution vectors—small and easy to compute—for detecting label and covariate shifts, and relies on gradients only when needed to capture concept drift. This design avoids accuracy loss caused by stale cluster assignments and eliminates the cost of global model propagation and local retraining.

\subsection{System overview}
\label{sec:overview}
\sysname consists of two core components: a centralized coordinator that manages client representations, performs clustering, and orchestrates model training; and a client-side module that tracks local representations and reports updates when drift is detected. As discussed in Section~\ref{subsec:combine_perclientmove_and_global}, neither per-client adjustment nor global re-clustering alone performs well across all drift scenarios. \sysname adopts a hybrid approach: it uses per-client adjustment by default and triggers global re-clustering only when necessary based on a threshold~$\tau$. When clients detect drift, they send updated representations to the coordinator. The coordinator initially reassigns each drifted client to the nearest cluster without updating cluster centers during this phase to ensure deterministic outcomes regardless of client processing order. After all individual adjustments, the coordinator recalculates cluster centers and measures the shift. If any center has moved more than~$\tau$, global re-clustering is triggered for all clients; otherwise, the updated cluster membership is finalized. In our prototype, setting~$\tau$ to one-third of the average inter-cluster distance worked empirically well. Figure~\ref{fig:clustering_workflow} illustrates this clustering process using label-distribution vectors as representation. Clients submit their representations upon registration and whenever drift is detected. The coordinator updates client metadata and assigns each client to the nearest cluster. Before selecting participants for the next round, it checks whether any cluster centers have shifted beyond~$\tau$ to determine if global re-clustering is needed.

\begin{figure}
    \centering 
    \includegraphics[width=0.8\linewidth, trim={0.2cm 5.4cm 4.1cm 0.1cm},clip]{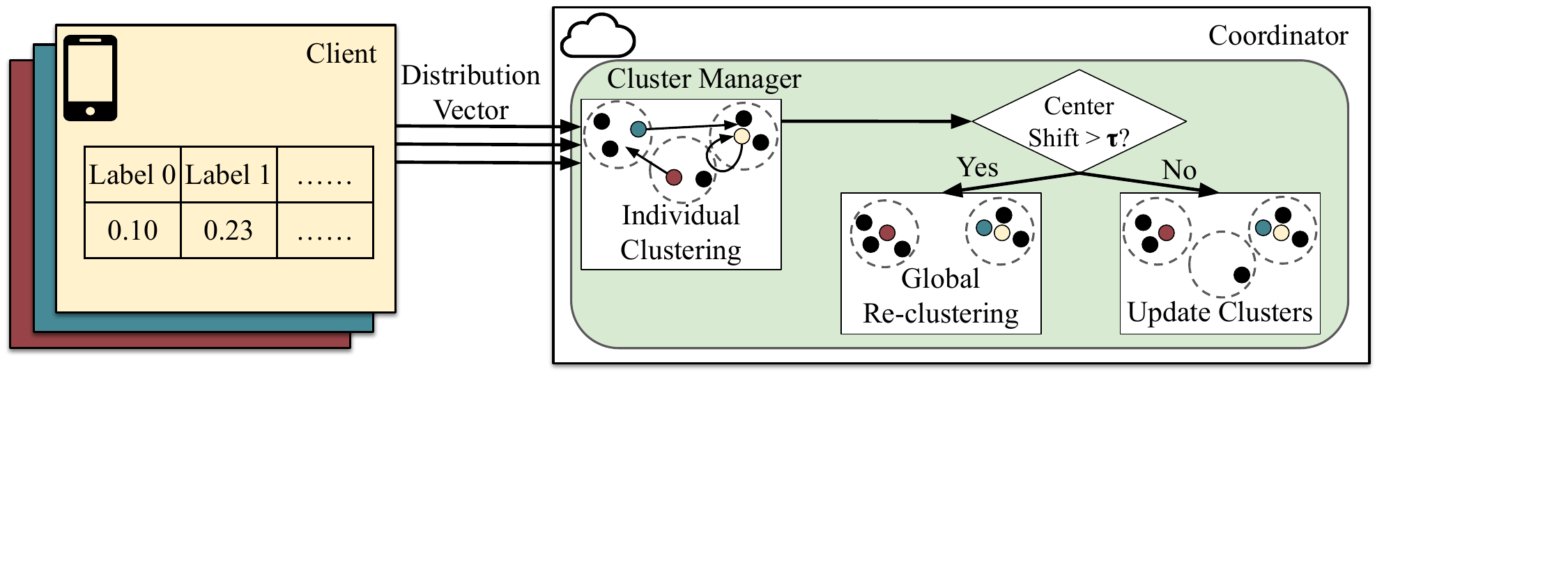}
    \caption{Clustering overview of \sysname with label distribution as client representation. Clients send distribution vectors to the coordinator; the coordinator moves drifted clients to the closest cluster and triggers global re-clustering if any cluster center shifts by a distance larger than $\tau$. 
    }
    \label{fig:clustering_workflow}

\end{figure}





\subsection{Algorithmic framework}
\label{algorithm}

In this section, we describe our algorithm~-- presented in \Cref{alg:main}~-- in detail.
Initially, \sysname clusters the clients using the $k$-means algorithm.
The distance between clients is measured based on clients representations; in particular, for label-based clustering, we measure distance between data distributions, which is computed as the $\ell_1$-distance between their histograms: that is, assuming we have $l$ labels, the distance between clients with label distribution $p_1, \ldots, p_l$ and $q_1, \ldots, q_l$ is defined as $\sum_{i=1}^l |p_i - q_i|$.

We assume that the algorithm is executed for $T$ global iterations, and during each iteration, the label distribution of each client might change, for instance by adding or removing data points.
The first step of \sysname is to handle data drift (\Cref{alg:data_drift}).
Our goal is, by the end of each iteration, to maintain a good model for each cluster, with respect to its clients' data.
In this step, each drifted client is assigned to the closest cluster.
If this causes a significant change in cluster centers~-- namely, if there exists a cluster moving by more than $\theta / 3$, where $\theta$ is the average distance between cluster centers~-- we recluster clients from scratch.
This condition avoids frequent reclustering, which might require excessive resources and can adversely change the clients' losses.
Importantly, we recluster all the clients, not just available clients, which, as we show in \Cref{sec:motivation}, improves the accuracy.

For the global reclustering, we choose the number of clusters that gives the highest silhouette score.
After reclustering, we compute new cluster models: for each client $i$, let $\vx_i$ be its old cluster model, and then for each new cluster $C_k$, we define its model as an average of $\vx_i$ for $i \in C_k$ (i.e., we average the old client models).\shrink{\par} After the clusters are computed, we train the cluster models for $R$ rounds.
In each round, we sample clients from the cluster and run $L$ local iterations of gradient descent.
After local iterations, we set the cluster model as the average of the models of all participating clients.

\begin{figure}[t]
\begin{minipage}[t!]{0.48\textwidth}
\begin{algorithm}[H]
    \caption{CFL with data drift}
    \label{alg:main}
    \begin{flushleft}
    Partition clients into clusters using $k$-means clusters based on client representations \\
    Initialize cluster models $\vc^{(1)}, \ldots, \vc^{(K)}$ \\
    \For{every iteration $t = 0, \ldots, T-1$} {
        Handle data drift using \Cref{alg:data_drift} \\
        Let $C_1, \ldots, C_K$ be the clusters \\
        Let $M$ be the number of machines sampled per iteration \\
        \For{every cluster $k = 1, \ldots, K$ in parallel} {
            \For{$R$ rounds} {
                Sample a set $S$ of $M / K$ clients from $C_k$ based on the selection strategy \\
                \For{each client $i \in S$ in parallel} {
                    Initialize $\vx_i = \vc^{(k)}$ \\
                    \For{$L$ local iterations} {
                        $\vx_i \gets \vx_i - \step \G{i}{t}(\vx_i)$
                    }
                }
                $\vc^{(k)} \gets \avg_{i \in S} \vx_i$
            }
        }
        \textbf{report} cluster models and client-to-cluster assignment  
    }
    \end{flushleft}
\end{algorithm}
\end{minipage}
\hfill
\begin{minipage}[t!]{0.48\textwidth}
\begin{algorithm}[H]
    \caption{Handling data drift}
    \label{alg:data_drift}
    \begin{flushleft}
    \For{each drifting client $x$} {
        Assign $x$ to the closest cluster center \\
        Recompute the centers of the affected clusters
    }
    Let $\theta$ be the average distance between the cluster centers \\
    \If{some center moved by at least $\theta / 3$} {
        \For{each client $i$}{
            Let $\vx_i$ be the model corresponding to the $i$'th client's cluster
        }
        Let $C_1, \ldots, C_K$ be $k$-means clusters based on client representations \\
        \For{each cluster $C_k$} {
            $\vc^{{(k)}} \gets \avg_{i \in C_k} \vx_i$
        }
    }
    \end{flushleft}
\end{algorithm}
\begin{algorithm}[H]
    \caption{Clustering}
    \label{alg:clustering}
    \begin{flushleft}
    Define the distance between clients based on their representations \\
    Choose $K$ with the largest silhouette score \\
    Cluster the clients using the $K$-means clustering
    \end{flushleft}
\end{algorithm}
\end{minipage}
\end{figure}

\subsection{Convergence}
\label{algosec:convergence}

In \Cref{app:convergence}, we analyze the performance of simplified version of our framework by bounding the average of clients' loss functions at every iteration.
The major challenge in the analysis is bounding the adverse effects of data drifts and reclustering.
Data drifts change client datasets, hence changing their local objective functions.
On the other hand, while reclustering is useful in the long run, its immediate effect on the objective can potentially be negative (see \Cref{fig:all_vs_selected}) due to the mixing of models from different clusters.

We make the following assumptions.\footnote{See \Cref{app:convergence} for the precise statements and the discussion of the assumptions}
First, we assume that the distance between client representations~-- for example label distributions~-- translates into the difference between their objective functions.
Second, we assume that the effect of data drift is bounded: that is, the representation $r_i$ of client $i$ changes by at most $\delta$ for each data drift.
Finally, we assume that clients are clusterable; that is, there exists $K$ clusters so that representations of clients within each cluster are similar.
\begin{assumption}
    \label{ass:all}
    Each $\f{i}{t}$~-- the local objective for client $i$ at iteration $t$~-- is $L$-smooth and satisfies $\mu$-Polyak-\L{}ojasiewicz condition, and we have access to a stochastic oracle with variance $\sigma^2$.
    There exists clustering such that representations inside each cluster are $\Delta$-close, the data drift changes representations by at most $\delta$, and the ratio between the difference between the objective function values and the distance between representations is at most $\theta$.
\end{assumption}
Under these assumptions, data drifts can affect each client representation~-- and hence the objective we optimize~-- by at most a fixed value.
Moreover, by the clustering assumption, clients within each cluster have similar objectives after the reclustering, allowing us to bound the increase in the loss function due to reclustering.
In Appendix~\ref{app:convergence}, we show the following result.
\begin{theorem}
    Let $N$ be the number of clients and $M$ be the total number of machines sampled per round.
    Let $\vx^*$ be the minimizer of $f_0 = \avg_i \f{i}{0}$.
    Let $\C{k,*}{t}$ be the minimizer for cluster $k$ at iteration $t$.
    Then, under \Cref{ass:all}, for $\step \le 1/L$, for any iteration $t$ we have
    \begin{align*}
        \frac{1}{N} \sum_{k \in [K]} \sum_{i \in C_k} \pars{\f{i}{t}(\C{k}{t + 1}) - \f{i}{t}(\C{k,*}{t})}
        &\le (1 - \step \mu)^{tR} \pars{f_0(\x{0}) - f_0(\vx^*)} \\
        &\quad + \frac{L \step}{2\mu} \pars{\frac{\sigma^2 + 8 L \theta \Delta}{M/K} + 3 \theta (\Delta + \delta) (1 - \step \mu)^{R}}
    \end{align*} 
\end{theorem}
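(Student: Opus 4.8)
The plan is to view the whole run as $tR$ stochastic gradient steps on a slowly changing per-cluster objective, and to track the average cluster suboptimality through a single contraction-plus-perturbation recursion. First I would reduce the left-hand side to per-cluster average objectives: writing $F_k^{(t)}(\vx) = \frac{1}{|C_k|}\sum_{i\in C_k}\f{i}{t}(\vx)$, the quantity $\C{k,*}{t}$ minimizes $F_k^{(t)}$ and the target sum equals $\frac1N\sum_k |C_k|\pars{F_k^{(t)}(\C{k}{t+1}) - F_k^{(t)}(\C{k,*}{t})}$. Each $F_k^{(t)}$ inherits $L$-smoothness and the $\mu$-PL property from its summands, so the analysis decouples into (i) the $R$ training rounds inside iteration $t$, and (ii) the drift-plus-reclustering transition between iterations.

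For step (i) I would show that each round is an unbiased stochastic gradient step on $F_k^{(t)}$ whose variance splits into the oracle term and a within-cluster heterogeneity term, both reduced by the $M/K$ sampled clients. The heterogeneity term is where the constant $8L\theta\Delta$ appears: for two clients in the same cluster the difference $h = \f{i}{t}-\f{j}{t}$ is $2L$-smooth and, by the representation-to-objective assumption, satisfies $\abs{h(\vx)}\le\theta\Delta$ pointwise; applying $\norm{\nabla h(\vx)}^2 \le 2\beta\pars{h(\vx)-\inf h}$ with $\beta = 2L$ and range $2\theta\Delta$ yields a squared-gradient dispersion of at most $8L\theta\Delta$. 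The effective variance is thus $\tfrac{\sigma^2+8L\theta\Delta}{M/K}$, and the standard PL--SGD recursion $\Exp{F_k^{(t)}(\vx_{r+1})}-\min F_k^{(t)} \le (1-\step\mu)\pars{\Exp{F_k^{(t)}(\vx_r)}-\min F_k^{(t)}} + \tfrac{L\step^2}{2}\cdot\tfrac{\sigma^2+8L\theta\Delta}{M/K}$, summed as a geometric series over the $R$ rounds, contracts the starting suboptimality by $(1-\step\mu)^R$ and leaves the floor $\tfrac{L\step}{2\mu}\cdot\tfrac{\sigma^2+8L\theta\Delta}{M/K}$.

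For step (ii) I would bound how much the transition inflates suboptimality before training. Drift moves each representation by at most $\delta$, so by the ratio-$\theta$ assumption every $F_k^{(t)}$ value, and its minimum, moves by at most $\theta\delta$, contributing $2\theta\delta$. Reclustering replaces each new cluster model by an average of old client models; since the reclustered clients are $\Delta$-close in representation their objectives differ by at most $\theta\Delta$, and I would bound the suboptimality of the averaged iterate at the new objective by combining a Jensen/smoothness control of the average with this $\theta\Delta$ closeness, giving an $O(\theta\Delta)$ term. Collecting the two contributions bounds the per-transition perturbation by $3\theta(\Delta+\delta)$; because it is injected at the start of an iteration, it is subsequently contracted by that iteration's $R$ rounds, producing the floor factor $(1-\step\mu)^R\,3\theta(\Delta+\delta)$ after scaling by $\tfrac{L\step}{2\mu}$.

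Finally I would assemble the one-step recursion $a_{t+1} \le (1-\step\mu)^R\pars{a_t + 3\theta(\Delta+\delta)} + \tfrac{L\step}{2\mu}\cdot\tfrac{\sigma^2+8L\theta\Delta}{M/K}$ for the average cluster suboptimality $a_t$ and unroll it, bounding the geometric sums: the initial term decays as $(1-\step\mu)^{tR}$ multiplying $f_0(\x{0})-f_0(\vx^*)$ (using that at $t=0$ the cluster objectives average to $f_0$ with minimizer $\vx^*$), while the two perturbation sources collapse into the stated floor. The hard part will be step (ii), specifically controlling the reclustering average: averaging near-optimal models of distinct old clusters is a non-convex operation, so I must lean on the representation-closeness $\Delta$ together with smoothness and the PL inequality to certify that the mixed model is still $O(\theta(\Delta+\delta))$-suboptimal for the new objective. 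A secondary subtlety is the bookkeeping that keeps the perturbation and variance contributions from accumulating across iterations, so that the geometric sums close into the single $\tfrac{L\step}{2\mu}(\cdots)$ floor rather than incurring a $\tfrac{1}{1-(1-\step\mu)^R}$ blow-up.
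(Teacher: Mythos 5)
Your proposal follows essentially the same route as the paper's proof: the same decomposition into $R$-round PL--SGD contraction with client-sampling variance $\frac{\sigma^2+8L\theta\Delta}{M/K}$ (derived from the identical $2L$-smoothness bound on $\f{i}{t}-\f{j}{t}$) plus a per-transition perturbation of $3\theta(\Delta+\delta)$ from drift and reclustering, assembled into the same contraction-plus-perturbation recursion and unrolled. The two subtleties you flag are genuine and are in fact glossed over in the paper itself — the reclustering step there invokes convexity (Jensen) for the model average even though only smoothness and PL are assumed, and the accumulation of the perturbation terms into the single $\frac{L\step}{2\mu}$ floor is asserted without the geometric-sum bookkeeping — so your plan matches the paper's argument while being more candid about where the work lies.
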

Intuitively, at every iteration, we provide a ``regret'' bound, comparing the loss at each cluster with the best loss we could have at each cluster.
The bound has two terms: an exponentially decaying term corresponding to the initial loss, and a non-vanishing term corresponding to stochastic noise and the loss due to clustering and data drift.
Ultimately, at early stages, the first term dominates, and we observe rapid improvement in objective value.
On the other hand, at later stages, the first term vanishes, and the behavior is dictated by the stochastic and sampling noise, and severity of data drifts.

\shrink{Compared with many previous works with provable guarantees~\citep{ifca,personal_fl_approaches}, we don't require all participating clients to compute the loss function on all machines.
The closest work is~\citet{ma_ConvergenceClustered_2022}, which doesn't handle data drifts and assumes that the gradients inside each cluster are close~-- an assumption about the algorithm behavior (considering that the clustering is performed based on model parameters), not about the problem parameters.
\dima{Not sure I like how this sounds. I say this to emphasize that it's not the case that that paper made some assumptions, and we made some set of incomparable assumptions.}}

\section{Evaluation}
\label{sec:evaluation}

We evaluate \sysname prototype on four FL tasks with label distribution vectors as client representations (see \Cref{app:representation} for results with other representations). End-to-end results highlight that \sysname improves final test accuracy by 1.9\%-5.9\% and is more stable than prior CFL methods. It works well with complementary FL optimizations, remains robust against malicious clients reporting corrupted distribution vectors, and still accelerates model convergence under low heterogeneity. 

\textbf{Environment.} We emulate large-scale FL training with two GPU servers, each with two NVIDIA A100 GPUs (80 GB memory) and two AMD EPYC 7313 16-core CPUs. We use FedScale's device datasets for realistic device computing and network capacity profiles while following the standardized training setup in the original paper~\citep{fedscale}. Per-device computation time is estimated from device computing speed, batch size, and number of local iterations; communication time is estimated based on the volume of downloaded and uploaded data and network bandwidth.

\textbf{Datasets and models.}
We use a satellite image dataset (FMoW~\citep{fmow}) and two video datasets (Waymo Open~\citep{waymo} and Cityscapes~\citep{cityscapes}) that exhibit natural data drift. For FMoW, we keep images taken after January 1, 2015 and create one client per UTM zone. Two training rounds correspond to one day. For Waymo Open and Cityscapes, we use video segments pre-processed by Ekya~\citep{ekya}. We create one client per camera per Waymo Open segment (212 clients in total), and one client per 100 consecutive frames per Cityscapes segment (217 clients in total).  As these two datasets have video frame IDs within segments but lack global timestamps, we sort samples by frame ID and split them into 10 intervals. We stream in one data interval every 30 rounds with Cityscapes and every 20 rounds with Waymo Open. We train ResNet-18~\citep{resnet} on Cityscapes and FMoW, and VisionTransformer-B16~\citep{visiontransformer} on Waymo Open.

To evaluate \sysname under highly dynamic drift, we construct a synthetic trace using the FedScale Open Images~\citep{openimages} benchmark. We find the top 100 most frequent classes and retain clients with samples from at least 10 of these 100 classes (5078 clients in total). Each client randomly partitions local data \textit{labels} into 10 buckets, with each bucket then containing all samples of its labels. We stream in one bucket every 50 rounds and use ShuffleNet v2~\citep{shufflenetv2} for this task. For all four datasets, clients start with 100 rounds worth of data and retain samples from the most recent 100 rounds. Dataset and training configurations are summarized in Table~\ref{table:dataset_setup} and Table~\ref{table:training_parameter}.

\textbf{Baselines.} Our non-clustering baseline trains one global model by randomly selecting a subset of participants from all available clients every round. To compare with other clustered FL works with data drift handling measures, we use Auxo~\citep{auxo} as the continuous re-clustering baseline and FlexCFL~\citep{flexcfl} as the individual movement baseline. We employ FedProx~\citep{fedprox}, a federated optimization algorithm that tackles client heterogeneity, for all approaches. 

\textbf{Metrics.} Our main evaluation metrics are Time-to-Accuracy (TTA) and final test accuracy. We define TTA as the training time required to achieve the target accuracy--the average client test accuracy when training a single global model (black dashed lines in Figure~\ref{fig:tta_all}). We report the final average test accuracy across diverse settings to demonstrate the sensitivity and robustness of \sysname.

\begin{figure*}[t!]
    \centering
    \includegraphics[clip, trim=2.9cm 2.9cm 1cm 0cm, width=\textwidth]{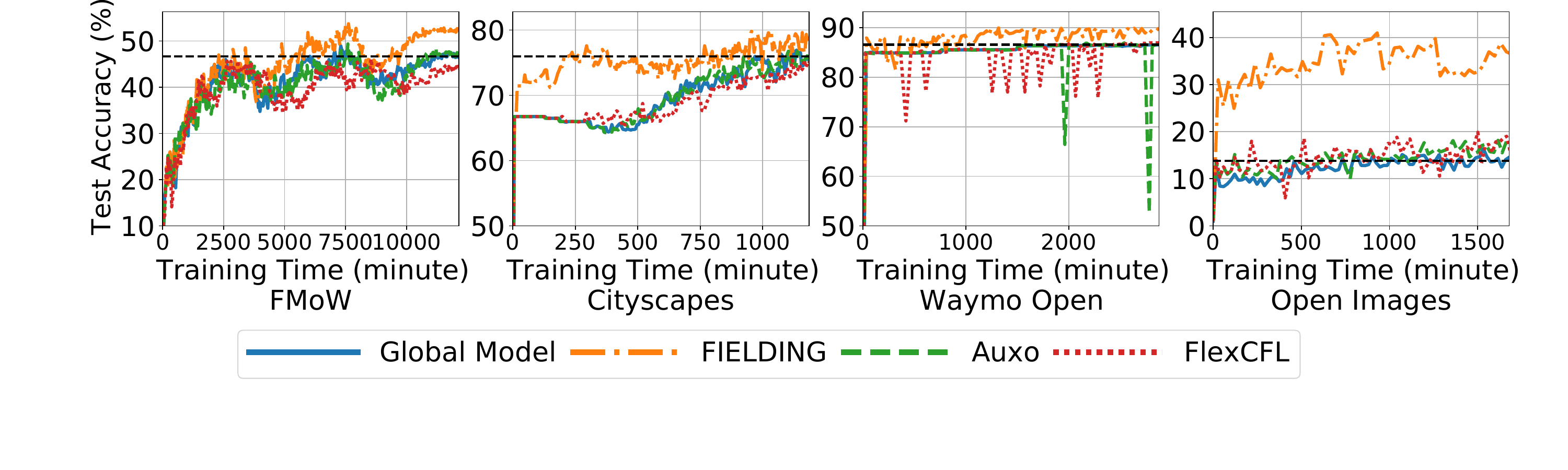}
    \caption{Time to accuracy (TTA) comparison over four tasks.}
    \label{fig:tta_all}
\end{figure*}

\subsection{End-to-end training performance}
\label{sec:eval:e2e}
Figure~\ref{fig:tta_all} shows that \sysname improves average client test accuracy by 5.9\%, 1.9\%, and 3.4\% in FMoW, Cityscapes, and Waymo Open, respectively. These improvements are significant, given that baseline test accuracies range from 45\% to 85\%, and are comparable to the accuracy gains prior CFL works achieve on \textit{static data}. When considering the global model's final test accuracy as the target accuracy, \sysname offers a 1.27$\times$, 1.16$\times$, and 2.23$\times$ training speed-up, respectively (We define the point at which \sysname's accuracy consistently surpasses the target accuracy as the moment it achieves the target accuracy). On the highly heterogeneous and dynamic Open Images trace, \sysname boosts accuracy by 17.9-26.1\% in the final 100 rounds.

While Auxo performs comparably or slightly better than the global model baseline, the benefits are limited: up to 0.9\% accuracy gain and up to 1.05$\times$ convergence speedup. These results fall short of those reported in the original paper on static datasets, suggesting a decline in clustering effectiveness. This degradation stems from re-clustering only the selected participants after each round, ignoring drifted but unselected clients. This approach also leads to sudden accuracy drops on the biased Waymo Open dataset, where over 80\% samples are cars, due to sudden model shifts when divergent clients are selected. FlexCFL shows no improvement in accuracy or convergence across all real-world datasets, as it migrates clients individually without adjusting the number of clusters. As Figure~\ref{fig:fmow_intra_cluster_hetero} shows, this approach leads to cluster heterogeneity approaching that of the global client set.

\textbf{FedDrift Comparison.}
\label{sec:eval:feddrift}
FedDrift~\citep{federateddistributeddrift} is another clustered FL method designed to handle drift. However, it is impractical for large-scale settings, as it (1) requires every client to train every cluster model in each round and (2) assumes all clients remain online and available for training. Therefore, we construct a small-scale setting with the FMoW dataset and 50 clients (the number of clients we selected per round in our original experiment). Figure~\ref{fig:feddrift_comparison} shows that \sysname achieves 2.5\% higher final accuracy and reaches FedDrift's final accuracy 2.01$\times$ faster.


\subsection{Compatibility with other optimizations}

To demonstrate that \sysname is agnostic to client selection and model aggregation strategies, we evaluate \sysname's performance on FMoW together with various complementary optimizations. In Figure~\ref{fig:selection} and Figure~\ref{fig:aggregation}, we use the algorithm name alone to denote the baseline where we train one global model and use "+ \sysname" to denote running the algorithm atop \sysname.

We use the Oort selection algorithm~\citep{oort} and a distance-based algorithm prioritizing clients closer to the distribution center as client selection examples. As shown in Figure~\ref{fig:selection}, \sysname improves the final average test accuracy by 6.0\% and 4.5\% and reach the target accuracy $2.62\times$ and $1.17\times$ faster. Note that Oort selection actively incorporates clients otherwise filtered out due to long response time, leading to a longer average round time. Figure~\ref{fig:aggregation} shows that \sysname works well with aggregation algorithm FedYogi~\citep{fedyogi} and q-FedAvg~\citep{qfedavg}. \sysname improves the final accuracy by 5.9\% and 4.9\% while giving a $1.34\times$ and $1.25\times$ speedup respectively.


\subsection{Robustness and sensitivity analysis}
\textbf{Malicious clients.} We study the impact of having malicious clients who intentionally report wrong label distribution vectors. We simulate such behavior by randomly selecting a subset of clients who permute the coordinates of their distribution vectors when registering with the coordinator. As reported in Figure~\ref{fig:robustness}, \sysname consistently outperforms the baseline global model accuracy across different percentages of malicious clients.

\begin{figure}[t!]
    \begin{minipage}[t]{0.49\textwidth}
    \begin{figure}[H]
        \centering
        \begin{subfigure}[t]{0.49\textwidth}
            \centering
            \includegraphics[width=\linewidth, trim={0.4cm 2.3cm 0.38cm 0.1cm}, clip]{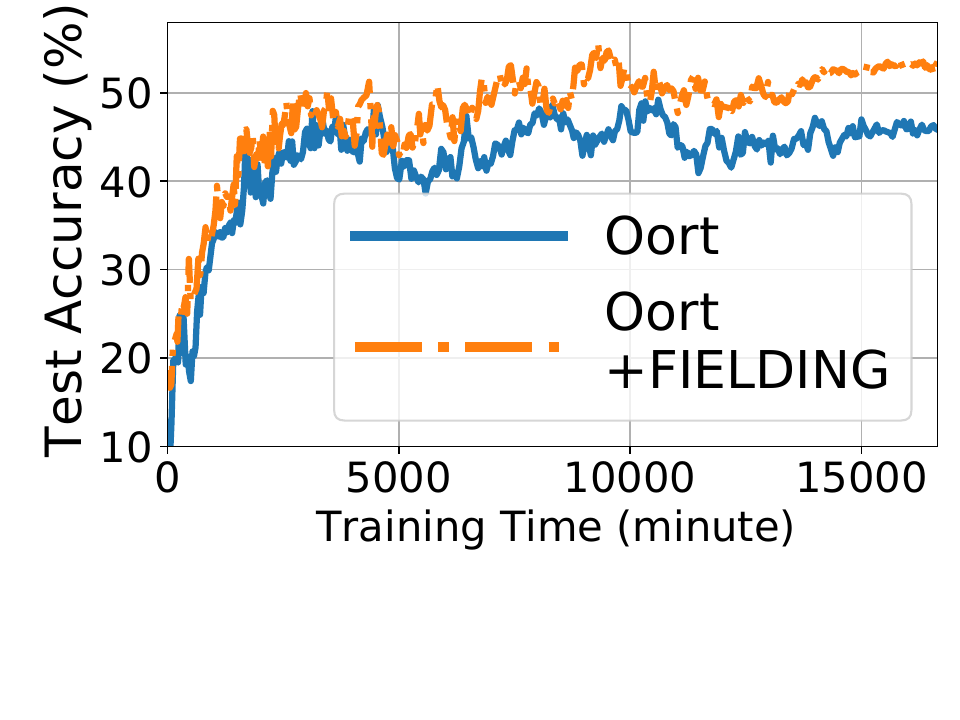}
            \label{fig:selection_oort}
            \caption{Oort selection}
        \end{subfigure}
        \hfill
        \begin{subfigure}[t]{0.49\textwidth}
            \centering
            \includegraphics[width=\linewidth, trim={0.4cm 2.3cm 0.38cm 0.1cm}, clip]{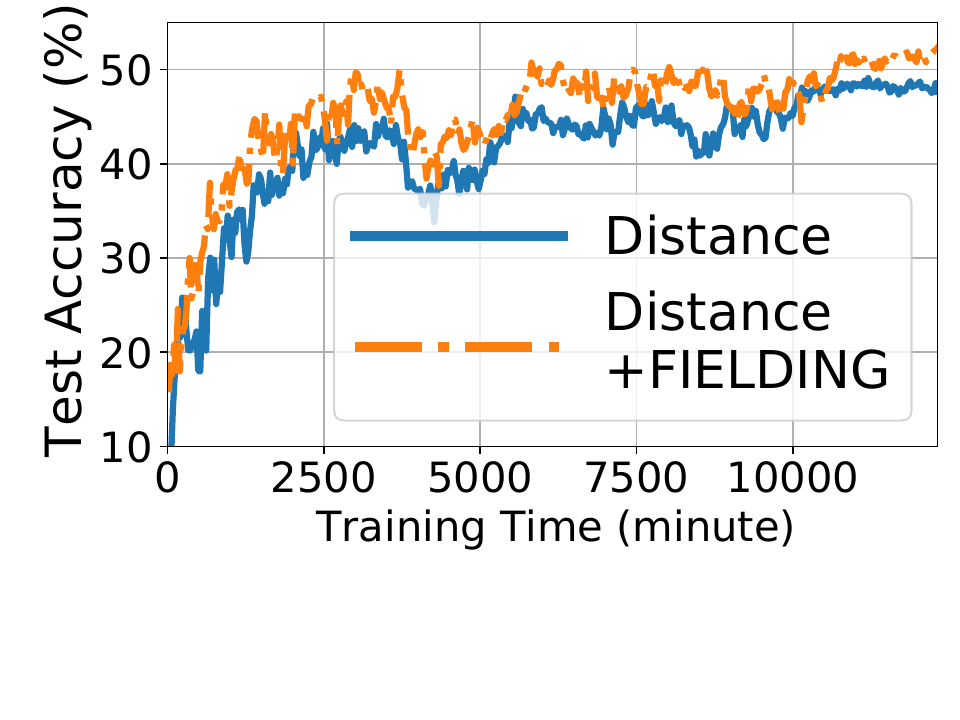}
            \begin{minipage}[t]{1.4in}
              \label{fig:selection_distance}
            \caption{Distance-based selection}
          \end{minipage} 
        \end{subfigure}
         \caption{\sysname with different client selection strategies on FMoW dataset.}
         \label{fig:selection}
    \end{figure}
    \end{minipage}
    \hfill
    \begin{minipage}[t]{0.5\textwidth}
    \begin{figure}[H]
        \centering
        \begin{subfigure}[t]{0.49\textwidth}
            \includegraphics[width=\linewidth, trim={0.4cm 2.3cm 0.38cm 0.1cm}, clip]{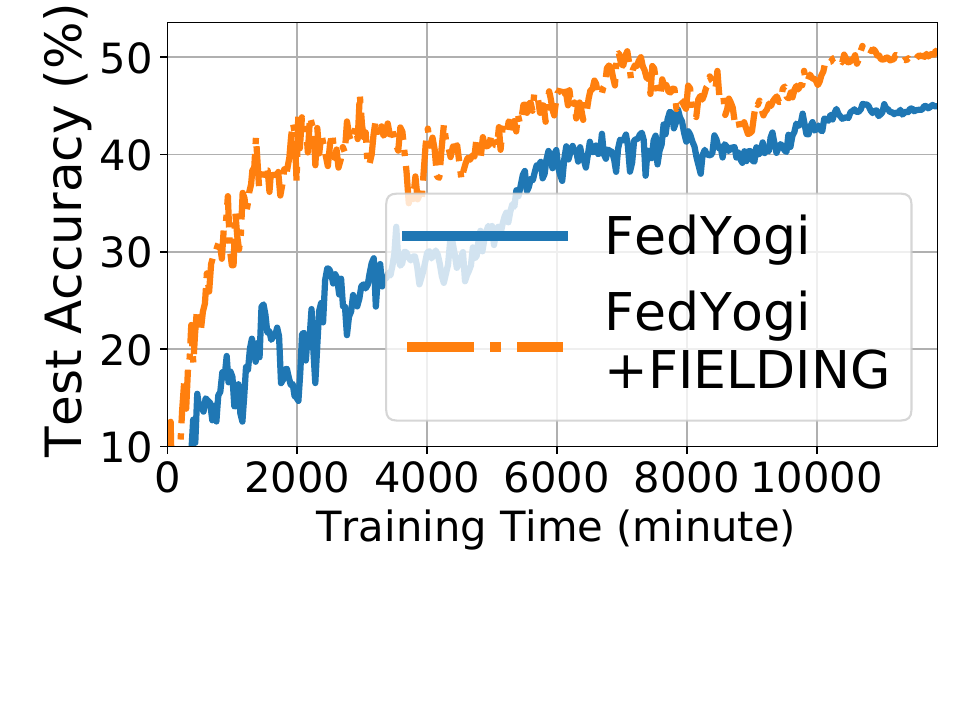}
            \label{fig:aggregation_fedyogi}
            \caption{FedYogi}
        \end{subfigure}
        \hfill
        \begin{subfigure}[t]{0.49\textwidth}
            \includegraphics[width=\linewidth, trim={0.2cm 2.3cm 0.38cm 0.1cm}, clip]{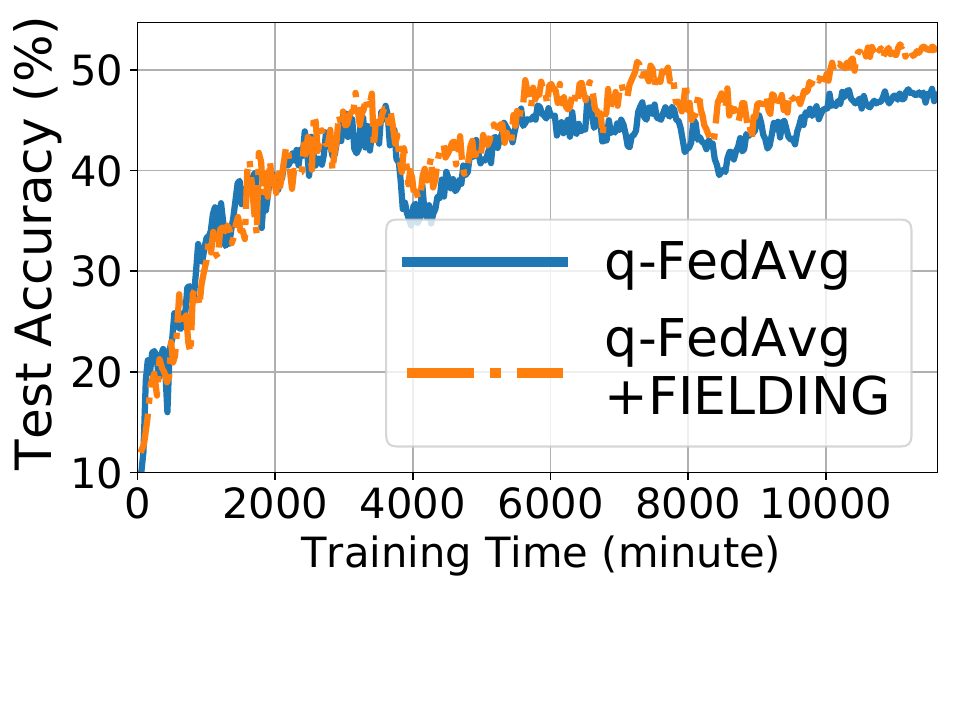}
            \label{fig:aggregation_qfedavg}
            \caption{q-FedAvg}
        \end{subfigure}
         \caption{\sysname with different FL algorithms on FMoW dataset.}
         \label{fig:aggregation}
    \end{figure}
    \end{minipage}
\end{figure}

\begin{figure}[t!]
    \begin{minipage}[t]{0.24\textwidth}
          \centering 
          \includegraphics[width=\textwidth, trim={0.4cm 2.3cm 0.38cm 0.1cm}, clip]{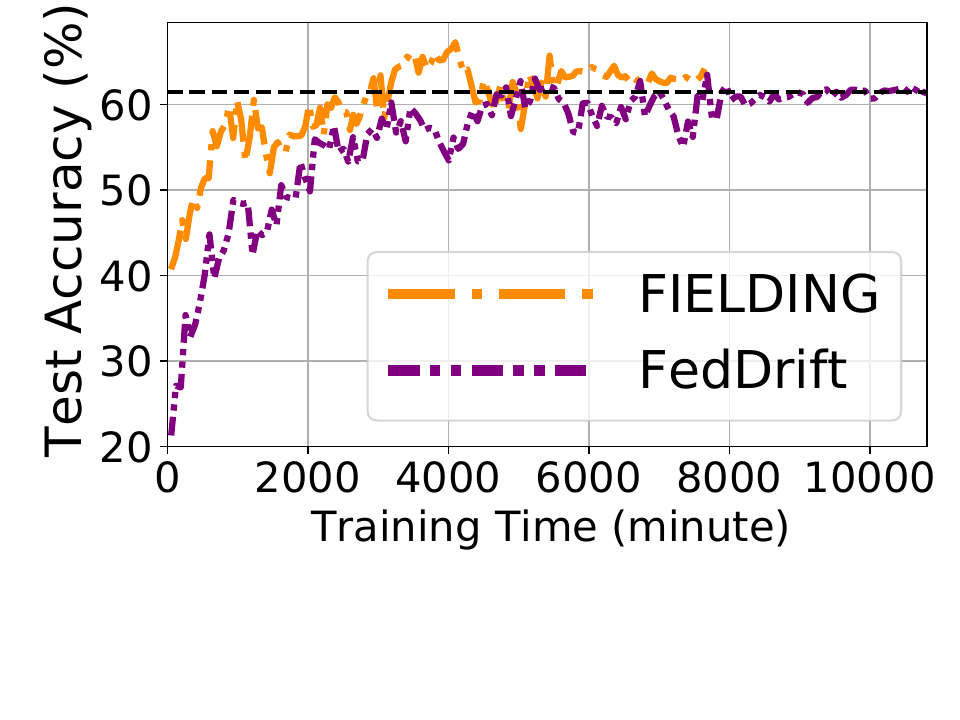} 
          \caption{Small-scale comparison against FedDrift on FMoW.}
          \label{fig:feddrift_comparison}        
    \end{minipage}
    \hfill
    \begin{minipage}[t]{0.25\textwidth}
      \centering 
      \includegraphics[width=\textwidth, trim={0.4cm 2.3cm 0.38cm 0.1cm}, clip]{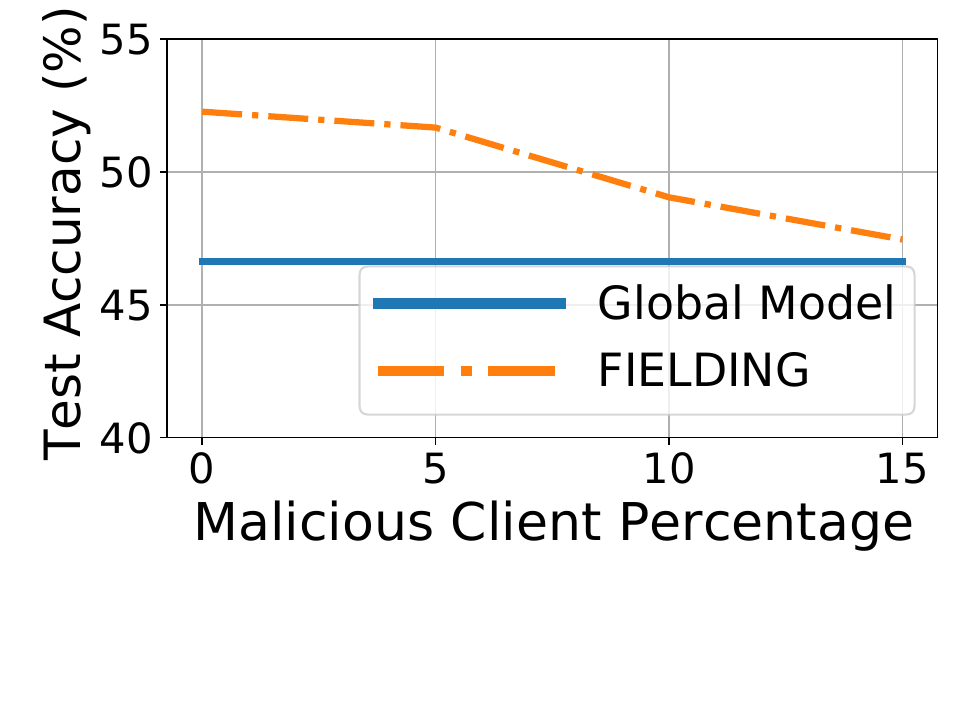} 
      \begin{minipage}[t]{1.2in}
          \caption{\sysname is robust to malicious clients.}
          \label{fig:robustness} 
      \end{minipage} 
    \end{minipage}
    \hfill
    \begin{minipage}[t]{0.25\textwidth}
      \centering 
      \includegraphics[width=\textwidth, trim={0.4cm 2.3cm 0.38cm 0.1cm}, clip]{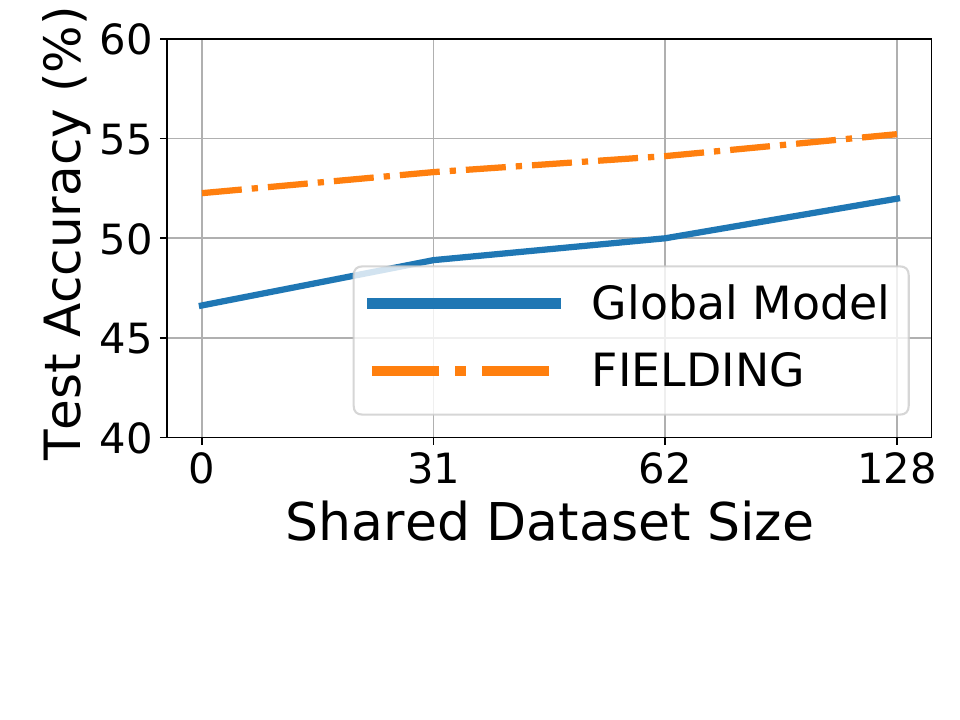}
      \begin{minipage}[t]{1.25in}
          \caption{\sysname offers gains across heterogeneity degrees.}
          \label{fig:sensitivity} 
      \end{minipage} 
    \end{minipage}
    \hfill
    \begin{minipage}[t]{0.24\textwidth}
      \centering 
      \includegraphics[width=\textwidth, trim={0.4cm 3.1cm 0.38cm 0cm}, clip]{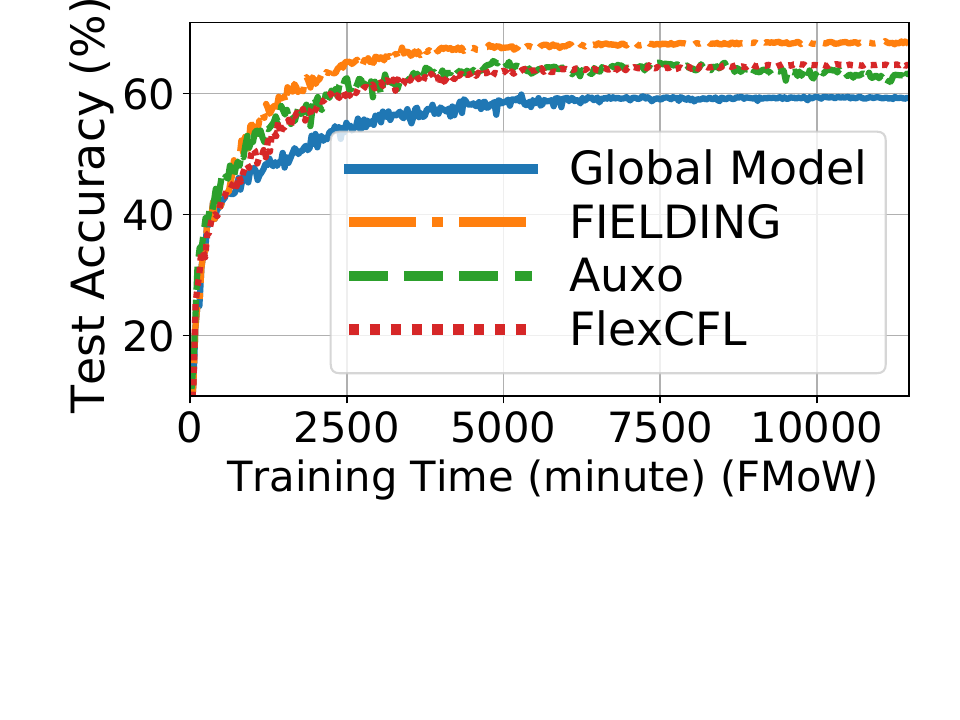} 
      \begin{minipage}[t]{1.2in}
          \caption{\sysname improves model accuracy on static data.}
          \label{fig:static_data} 
      \end{minipage} 
    \end{minipage}
\end{figure}

\textbf{Varying data heterogeneity.} Our analysis on \sysname's sensitivity to data heterogeneity degrees is inspired by the idea of improving training performance on non-IID data through a small shared dataset~\citep{noniddwithsharing}. We inject new training samples by creating three datasets shared with all clients: one sample for each of the least represented 50\% labels, one sample for each label, and two samples for each label. A larger shared set implies a smaller degree of heterogeneity. As shown in Figure ~\ref{fig:sensitivity}, both \sysname and the baseline global model benefit from this sharing approach, with \sysname improving the final average test accuracy by 3.2\% to 4.4\%.

\textbf{Static data.} Finally, we demonstrate that \sysname still improves model convergence when clients have static local data and no drifts happen. We rerun the experiment on the FMoW dataset with all data samples available throughout the training. As shown in Figure~\ref{fig:static_data}, \sysname offers the largest gain and improves the final average test accuracy by 9.2\%. Auxo and FlexCFL achieve an improvement of 3.9\% and 5.6\% respectively. Note that Auxo experiences an accuracy drop towards the end of training. This is likely due to Auxo re-clustering all selected clients after each round by default, causing unnecessary cluster membership changes when the data is completely static.

\section{Related works}
\label{sec:discussion}

\textbf{Heterogeneity-aware FL.} Recent works have also tackled heterogeneity challenges through client selection, workload scheduling, and update corrections. Oort considers both statistical and system utility and designs an exploration-exploitation strategy for client selection~\citep{oort}. PyramidFL adjusts the number of local iterations and parameter dropouts to optimize participants' data and system efficiency~\citep{pyramidfl}. DSS-Edge-FL dynamically determines the size of local data used for training and selects representative samples, optimizing resource utilization while considering data heterogeneity~\citep{dss_fl}. MOON corrects local updates on clients by maximizing the similarity between representations learned by local models and the global model~\citep{modelcontrasive}.

\textbf{Drifts-aware FL.} Adaptive-FedAvg handles concept drift by extending FedAvg with a learning rate scheduler that considers the variance of the aggregated model between two consecutive rounds~\citep{adaptivefedavg}. CDA-FedAvg detects concept drifts and extends FedAvg with a short-term and a long-term memory for each client. It applies rehearsal using data in the lone-term memory when drifts happen~\citep{continualfl}. Master-FL proposes a multi-scale algorithmic framework that trains clients across multiple time horizons with adaptive learning rates~\citep{onlinefl_dynamicregret}. They are complementary to our work and can improve cluster models when minor drifts that don’t trigger global re-clustering happen.

\sysname focuses on dynamic client clustering that is robust against potential data drift. It is agnostic to other FL optimizations and should work seamlessly with the FL aggregation, selection, and scheduling algorithms above.

\textbf{Clustered FL.} 
FlexCFL~\citep{flexcfl} and IFCA~\citep{ifca} adjust clusters by relocating shifted clients while maintaining a fixed number of clusters, operating under the assumption of low drift magnitude. Auxo~\citep{auxo} and FedAC~\citep{fedac} re-cluster selected clients using gradient information or local models, delaying drift adjustments for unselected clients. FedDrift~\citep{federateddistributeddrift} re-clusters all clients by broadcasting all cluster models every round to compute client local training loss, leading to significant costs. \Cref{sec:motivation} discusses other prior CFL works without drift handling and the impact of drifts on static clusters. 




\section{Conclusion}
\label{conclusion}

We presented \sysname, a clustering-based Federated Learning framework designed to handle multiple types of data drift with low overhead. Effective drift handling requires adapting to varying drift magnitudes while maintaining efficiency and supporting diverse drift types. To meet these goals, \sysname makes two key design choices: it combines per-client adjustment with selective global re-clustering and re-clusters all drifted clients using lightweight, drift-aware representations. \sysname dynamically adjusts the number of clusters and stabilizes the system’s response to minor drifts. It is robust to malicious clients and varying levels of heterogeneity, and remains compatible with a range of client selection and aggregation strategies. In \sysname, clients register with a centralized coordinator, which maintains metadata and assigns each client to the nearest cluster based on its local representation. Before each training round, the coordinator checks for shifts in cluster centers. If any center moves beyond a predefined threshold, global re-clustering is triggered; otherwise, the system proceeds with updated cluster assignments, significantly reducing the overall re-clustering cost and mitigating loss increase after re-clustering. Experimental results show that \sysname improves final test accuracy by 1.9\%–5.9\% over existing methods and provides greater stability under real-world data drift.

\textbf{Limitations.} There are potential directions to further enhance \sysname from the perspective of client representation.
Reporting these representations~-- label distributions, embeddings, or gradients~-- poses potential privacy concerns.
The differentially private approaches for private clustering are hard to apply in our settings since we perform clustering multiple times.
Additionally, gradient representation, while being able to capture concept drifts, leads to significant overhead.
We will address these concerns in future work.

\bibliographystyle{ACM-Reference-Format}

\begin{thebibliography}{62}


\ifx \showCODEN    \undefined \def \showCODEN     #1{\unskip}     \fi
\ifx \showDOI      \undefined \def \showDOI       #1{#1}\fi
\ifx \showISBNx    \undefined \def \showISBNx     #1{\unskip}     \fi
\ifx \showISBNxiii \undefined \def \showISBNxiii  #1{\unskip}     \fi
\ifx \showISSN     \undefined \def \showISSN      #1{\unskip}     \fi
\ifx \showLCCN     \undefined \def \showLCCN      #1{\unskip}     \fi
\ifx \shownote     \undefined \def \shownote      #1{#1}          \fi
\ifx \showarticletitle \undefined \def \showarticletitle #1{#1}   \fi
\ifx \showURL      \undefined \def \showURL       {\relax}        \fi
\providecommand\bibfield[2]{#2}
\providecommand\bibinfo[2]{#2}
\providecommand\natexlab[1]{#1}
\providecommand\showeprint[2][]{arXiv:#2}

\bibitem[Abdulla et~al\mbox{.}(2024)]%
        {energyforecast}
\bibfield{author}{\bibinfo{person}{Nawaf Abdulla}, \bibinfo{person}{Mehmet Demirci}, {and} \bibinfo{person}{Suat Ozdemir}.} \bibinfo{year}{2024}\natexlab{}.
\newblock \showarticletitle{Smart meter-based energy consumption forecasting for smart cities using adaptive federated learning}.
\newblock \bibinfo{journal}{\emph{Sustainable Energy, Grids and Networks}}  \bibinfo{volume}{38} (\bibinfo{year}{2024}), \bibinfo{pages}{101342}.
\newblock
\showISSN{2352-4677}
\urldef\tempurl%
\url{https://doi.org/10.1016/j.segan.2024.101342}
\showDOI{\tempurl}


\bibitem[Ahmed et~al\mbox{.}(2020)]%
        {kmeans_overview}
\bibfield{author}{\bibinfo{person}{Mohiuddin Ahmed}, \bibinfo{person}{Raihan Seraj}, {and} \bibinfo{person}{Syed Mohammed~Shamsul Islam}.} \bibinfo{year}{2020}\natexlab{}.
\newblock \showarticletitle{The k-means algorithm: A comprehensive survey and performance evaluation}.
\newblock \bibinfo{journal}{\emph{Electronics}} \bibinfo{volume}{9}, \bibinfo{number}{8} (\bibinfo{year}{2020}), \bibinfo{pages}{1295}.
\newblock


\bibitem[Arthur and Vassilvitskii(2006)]%
        {kmeansplusplus}
\bibfield{author}{\bibinfo{person}{David Arthur} {and} \bibinfo{person}{Sergei Vassilvitskii}.} \bibinfo{year}{2006}\natexlab{}.
\newblock \bibinfo{booktitle}{\emph{k-means++: The advantages of careful seeding}}.
\newblock \bibinfo{type}{{T}echnical {R}eport}. \bibinfo{institution}{Stanford}.
\newblock


\bibitem[Bhardwaj et~al\mbox{.}(2022)]%
        {ekya}
\bibfield{author}{\bibinfo{person}{Romil Bhardwaj}, \bibinfo{person}{Zhengxu Xia}, \bibinfo{person}{Ganesh Ananthanarayanan}, \bibinfo{person}{Junchen Jiang}, \bibinfo{person}{Yuanchao Shu}, \bibinfo{person}{Nikolaos Karianakis}, \bibinfo{person}{Kevin Hsieh}, \bibinfo{person}{Paramvir Bahl}, {and} \bibinfo{person}{Ion Stoica}.} \bibinfo{year}{2022}\natexlab{}.
\newblock \showarticletitle{Ekya: Continuous Learning of Video Analytics Models on Edge Compute Servers}. In \bibinfo{booktitle}{\emph{19th USENIX Symposium on Networked Systems Design and Implementation (NSDI 22)}}. \bibinfo{publisher}{USENIX Association}, \bibinfo{address}{Renton, WA}, \bibinfo{pages}{119--135}.
\newblock
\showISBNx{978-1-939133-27-4}
\urldef\tempurl%
\url{https://www.usenix.org/conference/nsdi22/presentation/bhardwaj}
\showURL{%
\tempurl}


\bibitem[Bonawitz et~al\mbox{.}(2019)]%
        {google_fl}
\bibfield{author}{\bibinfo{person}{Keith Bonawitz}, \bibinfo{person}{Hubert Eichner}, \bibinfo{person}{Wolfgang Grieskamp}, \bibinfo{person}{Dzmitry Huba}, \bibinfo{person}{Alex Ingerman}, \bibinfo{person}{Vladimir Ivanov}, \bibinfo{person}{Chlo\'{e} Kiddon}, \bibinfo{person}{Jakub Kone\v{c}n\'{y}}, \bibinfo{person}{Stefano Mazzocchi}, \bibinfo{person}{Brendan McMahan}, \bibinfo{person}{Timon Van~Overveldt}, \bibinfo{person}{David Petrou}, \bibinfo{person}{Daniel Ramage}, {and} \bibinfo{person}{Jason Roselander}.} \bibinfo{year}{2019}\natexlab{}.
\newblock \showarticletitle{Towards Federated Learning at Scale: System Design}. In \bibinfo{booktitle}{\emph{Proceedings of Machine Learning and Systems}}, \bibfield{editor}{\bibinfo{person}{A.~Talwalkar}, \bibinfo{person}{V.~Smith}, {and} \bibinfo{person}{M.~Zaharia}} (Eds.), Vol.~\bibinfo{volume}{1}. \bibinfo{pages}{374--388}.
\newblock


\bibitem[Canonaco et~al\mbox{.}(2021)]%
        {adaptivefedavg}
\bibfield{author}{\bibinfo{person}{Giuseppe Canonaco}, \bibinfo{person}{Alex Bergamasco}, \bibinfo{person}{Alessio Mongelluzzo}, {and} \bibinfo{person}{Manuel Roveri}.} \bibinfo{year}{2021}\natexlab{}.
\newblock \showarticletitle{Adaptive federated learning in presence of concept drift}. In \bibinfo{booktitle}{\emph{2021 International Joint Conference on Neural Networks (IJCNN)}}. IEEE, \bibinfo{pages}{1--7}.
\newblock


\bibitem[Casado et~al\mbox{.}(2022)]%
        {continualfl}
\bibfield{author}{\bibinfo{person}{Fernando~E. Casado}, \bibinfo{person}{Dylan Lema}, \bibinfo{person}{Marcos~F. Criado}, \bibinfo{person}{Roberto Iglesias}, \bibinfo{person}{Carlos~V. Regueiro}, {and} \bibinfo{person}{Sen\'{e}n Barro}.} \bibinfo{year}{2022}\natexlab{}.
\newblock \showarticletitle{Concept drift detection and adaptation for federated and continual learning}.
\newblock \bibinfo{journal}{\emph{Multimedia Tools and Applications}} \bibinfo{volume}{81}, \bibinfo{number}{3} (\bibinfo{year}{2022}), \bibinfo{pages}{3397–3419}.
\newblock
\showISSN{1380-7501}
\urldef\tempurl%
\url{https://doi.org/10.1007/s11042-021-11219-x}
\showDOI{\tempurl}


\bibitem[Chen et~al\mbox{.}(2020)]%
        {asynchronousfl}
\bibfield{author}{\bibinfo{person}{Yujing Chen}, \bibinfo{person}{Yue Ning}, \bibinfo{person}{Martin Slawski}, {and} \bibinfo{person}{Huzefa Rangwala}.} \bibinfo{year}{2020}\natexlab{}.
\newblock \showarticletitle{Asynchronous online federated learning for edge devices with non-iid data}. In \bibinfo{booktitle}{\emph{2020 IEEE International Conference on Big Data (Big Data)}}. IEEE, \bibinfo{pages}{15--24}.
\newblock


\bibitem[Cho et~al\mbox{.}(7 29)]%
        {cho_ConvergenceFederated_2023}
\bibfield{author}{\bibinfo{person}{Yae~Jee Cho}, \bibinfo{person}{Pranay Sharma}, \bibinfo{person}{Gauri Joshi}, \bibinfo{person}{Zheng Xu}, \bibinfo{person}{Satyen Kale}, {and} \bibinfo{person}{Tong Zhang}.} \bibinfo{year}{2023-07-23/2023-07-29}\natexlab{}.
\newblock \showarticletitle{On the Convergence of Federated Averaging with Cyclic Client Participation}. In \bibinfo{booktitle}{\emph{Proceedings of the 40th International Conference on Machine Learning}} \emph{(\bibinfo{series}{Proceedings of Machine Learning Research}, Vol.~\bibinfo{volume}{202})}, \bibfield{editor}{\bibinfo{person}{Andreas Krause}, \bibinfo{person}{Emma Brunskill}, \bibinfo{person}{Kyunghyun Cho}, \bibinfo{person}{Barbara Engelhardt}, \bibinfo{person}{Sivan Sabato}, {and} \bibinfo{person}{Jonathan Scarlett}} (Eds.). \bibinfo{publisher}{PMLR}, \bibinfo{pages}{5677--5721}.
\newblock


\bibitem[Cho et~al\mbox{.}(2020)]%
        {cho_ClientSelection_2020}
\bibfield{author}{\bibinfo{person}{Yae~Jee Cho}, \bibinfo{person}{Jianyu Wang}, {and} \bibinfo{person}{Gauri Joshi}.} \bibinfo{year}{2020}\natexlab{}.
\newblock \bibinfo{title}{Client {{Selection}} in {{Federated Learning}}: {{Convergence Analysis}} and {{Power-of-Choice Selection Strategies}}}.
\newblock
\newblock
\urldef\tempurl%
\url{https://doi.org/10.48550/arXiv.2010.01243}
\showDOI{\tempurl}
\showeprint[arxiv]{2010.01243}~[cs, stat]


\bibitem[Christie et~al\mbox{.}(2018)]%
        {fmow}
\bibfield{author}{\bibinfo{person}{Gordon Christie}, \bibinfo{person}{Neil Fendley}, \bibinfo{person}{James Wilson}, {and} \bibinfo{person}{Ryan Mukherjee}.} \bibinfo{year}{2018}\natexlab{}.
\newblock \showarticletitle{Functional map of the world}. In \bibinfo{booktitle}{\emph{Proceedings of the IEEE Conference on Computer Vision and Pattern Recognition}}. \bibinfo{pages}{6172--6180}.
\newblock


\bibitem[Cordts et~al\mbox{.}(2016)]%
        {cityscapes}
\bibfield{author}{\bibinfo{person}{Marius Cordts}, \bibinfo{person}{Mohamed Omran}, \bibinfo{person}{Sebastian Ramos}, \bibinfo{person}{Timo Rehfeld}, \bibinfo{person}{Markus Enzweiler}, \bibinfo{person}{Rodrigo Benenson}, \bibinfo{person}{Uwe Franke}, \bibinfo{person}{Stefan Roth}, {and} \bibinfo{person}{Bernt Schiele}.} \bibinfo{year}{2016}\natexlab{}.
\newblock \showarticletitle{The Cityscapes Dataset for Semantic Urban Scene Understanding}. In \bibinfo{booktitle}{\emph{Proc. of the IEEE Conference on Computer Vision and Pattern Recognition (CVPR)}}.
\newblock


\bibitem[Dosovitskiy et~al\mbox{.}(2021)]%
        {visiontransformer}
\bibfield{author}{\bibinfo{person}{Alexey Dosovitskiy}, \bibinfo{person}{Lucas Beyer}, \bibinfo{person}{Alexander Kolesnikov}, \bibinfo{person}{Dirk Weissenborn}, \bibinfo{person}{Xiaohua Zhai}, \bibinfo{person}{Thomas Unterthiner}, \bibinfo{person}{Mostafa Dehghani}, \bibinfo{person}{Matthias Minderer}, \bibinfo{person}{Georg Heigold}, \bibinfo{person}{Sylvain Gelly}, \bibinfo{person}{Jakob Uszkoreit}, {and} \bibinfo{person}{Neil Houlsby}.} \bibinfo{year}{2021}\natexlab{}.
\newblock \showarticletitle{An Image is Worth 16x16 Words: Transformers for Image Recognition at Scale}. In \bibinfo{booktitle}{\emph{International Conference on Learning Representations}}.
\newblock
\urldef\tempurl%
\url{https://openreview.net/forum?id=YicbFdNTTy}
\showURL{%
\tempurl}


\bibitem[Duan et~al\mbox{.}(2021)]%
        {flexcfl}
\bibfield{author}{\bibinfo{person}{Moming Duan}, \bibinfo{person}{Duo Liu}, \bibinfo{person}{Xinyuan Ji}, \bibinfo{person}{Yu Wu}, \bibinfo{person}{Liang Liang}, \bibinfo{person}{Xianzhang Chen}, \bibinfo{person}{Yujuan Tan}, {and} \bibinfo{person}{Ao Ren}.} \bibinfo{year}{2021}\natexlab{}.
\newblock \showarticletitle{Flexible clustered federated learning for client-level data distribution shift}.
\newblock \bibinfo{journal}{\emph{IEEE Transactions on Parallel and Distributed Systems}} \bibinfo{volume}{33}, \bibinfo{number}{11} (\bibinfo{year}{2021}), \bibinfo{pages}{2661--2674}.
\newblock


\bibitem[Ganguly and Aggarwal(2023)]%
        {onlinefl_dynamicregret}
\bibfield{author}{\bibinfo{person}{Bhargav Ganguly} {and} \bibinfo{person}{Vaneet Aggarwal}.} \bibinfo{year}{2023}\natexlab{}.
\newblock \showarticletitle{Online Federated Learning via Non-Stationary Detection and Adaptation Amidst Concept Drift}.
\newblock \bibinfo{journal}{\emph{IEEE/ACM Transactions on Networking}} (\bibinfo{year}{2023}).
\newblock


\bibitem[Garg et~al\mbox{.}(2020)]%
        {unified_labelshift}
\bibfield{author}{\bibinfo{person}{Saurabh Garg}, \bibinfo{person}{Yifan Wu}, \bibinfo{person}{Sivaraman Balakrishnan}, {and} \bibinfo{person}{Zachary Lipton}.} \bibinfo{year}{2020}\natexlab{}.
\newblock \showarticletitle{A unified view of label shift estimation}.
\newblock \bibinfo{journal}{\emph{Advances in Neural Information Processing Systems}}  \bibinfo{volume}{33} (\bibinfo{year}{2020}), \bibinfo{pages}{3290--3300}.
\newblock


\bibitem[Ghosh et~al\mbox{.}(2020)]%
        {ifca}
\bibfield{author}{\bibinfo{person}{Avishek Ghosh}, \bibinfo{person}{Jichan Chung}, \bibinfo{person}{Dong Yin}, {and} \bibinfo{person}{Kannan Ramchandran}.} \bibinfo{year}{2020}\natexlab{}.
\newblock \showarticletitle{An Efficient Framework for Clustered Federated Learning}. In \bibinfo{booktitle}{\emph{Advances in Neural Information Processing Systems}}, \bibfield{editor}{\bibinfo{person}{H.~Larochelle}, \bibinfo{person}{M.~Ranzato}, \bibinfo{person}{R.~Hadsell}, \bibinfo{person}{M.F. Balcan}, {and} \bibinfo{person}{H.~Lin}} (Eds.), Vol.~\bibinfo{volume}{33}. \bibinfo{publisher}{Curran Associates, Inc.}, \bibinfo{pages}{19586--19597}.
\newblock


\bibitem[He et~al\mbox{.}(2016)]%
        {resnet}
\bibfield{author}{\bibinfo{person}{Kaiming He}, \bibinfo{person}{Xiangyu Zhang}, \bibinfo{person}{Shaoqing Ren}, {and} \bibinfo{person}{Jian Sun}.} \bibinfo{year}{2016}\natexlab{}.
\newblock \showarticletitle{Deep residual learning for image recognition}. In \bibinfo{booktitle}{\emph{Proceedings of the IEEE conference on computer vision and pattern recognition}}. \bibinfo{pages}{770--778}.
\newblock


\bibitem[Huang et~al\mbox{.}(2019)]%
        {fl_medicalrecords}
\bibfield{author}{\bibinfo{person}{Li Huang}, \bibinfo{person}{Andrew~L Shea}, \bibinfo{person}{Huining Qian}, \bibinfo{person}{Aditya Masurkar}, \bibinfo{person}{Hao Deng}, {and} \bibinfo{person}{Dianbo Liu}.} \bibinfo{year}{2019}\natexlab{}.
\newblock \showarticletitle{Patient clustering improves efficiency of federated machine learning to predict mortality and hospital stay time using distributed electronic medical records}.
\newblock \bibinfo{journal}{\emph{Journal of biomedical informatics}}  \bibinfo{volume}{99} (\bibinfo{year}{2019}), \bibinfo{pages}{103291}.
\newblock


\bibitem[Huba et~al\mbox{.}(2022)]%
        {papaya}
\bibfield{author}{\bibinfo{person}{Dzmitry Huba}, \bibinfo{person}{John Nguyen}, \bibinfo{person}{Kshitiz Malik}, \bibinfo{person}{Ruiyu Zhu}, \bibinfo{person}{Mike Rabbat}, \bibinfo{person}{Ashkan Yousefpour}, \bibinfo{person}{Carole-Jean Wu}, \bibinfo{person}{Hongyuan Zhan}, \bibinfo{person}{Pavel Ustinov}, \bibinfo{person}{Harish Srinivas}, {et~al\mbox{.}}} \bibinfo{year}{2022}\natexlab{}.
\newblock \showarticletitle{Papaya: Practical, private, and scalable federated learning}.
\newblock \bibinfo{journal}{\emph{Proceedings of Machine Learning and Systems}}  \bibinfo{volume}{4} (\bibinfo{year}{2022}), \bibinfo{pages}{814--832}.
\newblock


\bibitem[Huyen(2022)]%
        {mlsysdesign}
\bibfield{author}{\bibinfo{person}{Chip Huyen}.} \bibinfo{year}{2022}\natexlab{}.
\newblock \bibinfo{booktitle}{\emph{{Designing Machine Learning Systems}}}.
\newblock \bibinfo{publisher}{O'Reilly Media}, \bibinfo{address}{USA}.
\newblock
\showISBNx{978-1801819312}


\bibitem[Jothimurugesan et~al\mbox{.}(2023)]%
        {federateddistributeddrift}
\bibfield{author}{\bibinfo{person}{Ellango Jothimurugesan}, \bibinfo{person}{Kevin Hsieh}, \bibinfo{person}{Jianyu Wang}, \bibinfo{person}{Gauri Joshi}, {and} \bibinfo{person}{Phillip~B Gibbons}.} \bibinfo{year}{2023}\natexlab{}.
\newblock \showarticletitle{Federated learning under distributed concept drift}. In \bibinfo{booktitle}{\emph{International Conference on Artificial Intelligence and Statistics}}. PMLR, \bibinfo{pages}{5834--5853}.
\newblock


\bibitem[Kim et~al\mbox{.}(2024)]%
        {heteroswitch}
\bibfield{author}{\bibinfo{person}{Gyudong Kim}, \bibinfo{person}{Mehdi Ghasemi}, \bibinfo{person}{Soroush Heidari}, \bibinfo{person}{Seungryong Kim}, \bibinfo{person}{Young~Geun Kim}, \bibinfo{person}{Sarma Vrudhula}, {and} \bibinfo{person}{Carole-Jean Wu}.} \bibinfo{year}{2024}\natexlab{}.
\newblock \showarticletitle{HeteroSwitch: Characterizing and Taming System-Induced Data Heterogeneity in Federated Learning}.
\newblock \bibinfo{journal}{\emph{Proceedings of Machine Learning and Systems}}  \bibinfo{volume}{6} (\bibinfo{year}{2024}), \bibinfo{pages}{31--45}.
\newblock


\bibitem[Koh et~al\mbox{.}(2021)]%
        {wilds}
\bibfield{author}{\bibinfo{person}{Pang~Wei Koh}, \bibinfo{person}{Shiori Sagawa}, \bibinfo{person}{Henrik Marklund}, \bibinfo{person}{Sang~Michael Xie}, \bibinfo{person}{Marvin Zhang}, \bibinfo{person}{Akshay Balsubramani}, \bibinfo{person}{Weihua Hu}, \bibinfo{person}{Michihiro Yasunaga}, \bibinfo{person}{Richard~Lanas Phillips}, \bibinfo{person}{Irena Gao}, {et~al\mbox{.}}} \bibinfo{year}{2021}\natexlab{}.
\newblock \showarticletitle{Wilds: A benchmark of in-the-wild distribution shifts}. In \bibinfo{booktitle}{\emph{International conference on machine learning}}. PMLR, \bibinfo{pages}{5637--5664}.
\newblock


\bibitem[Kuznetsova et~al\mbox{.}(2020)]%
        {openimages}
\bibfield{author}{\bibinfo{person}{Alina Kuznetsova}, \bibinfo{person}{Hassan Rom}, \bibinfo{person}{Neil Alldrin}, \bibinfo{person}{Jasper Uijlings}, \bibinfo{person}{Ivan Krasin}, \bibinfo{person}{Jordi Pont-Tuset}, \bibinfo{person}{Shahab Kamali}, \bibinfo{person}{Stefan Popov}, \bibinfo{person}{Matteo Malloci}, \bibinfo{person}{Alexander Kolesnikov}, {et~al\mbox{.}}} \bibinfo{year}{2020}\natexlab{}.
\newblock \showarticletitle{The Open Images Dataset V4: Unified image classification, object detection, and visual relationship detection at scale}.
\newblock \bibinfo{journal}{\emph{International journal of computer vision}} \bibinfo{volume}{128}, \bibinfo{number}{7} (\bibinfo{year}{2020}), \bibinfo{pages}{1956--1981}.
\newblock


\bibitem[Lai et~al\mbox{.}(2022)]%
        {fedscale}
\bibfield{author}{\bibinfo{person}{Fan Lai}, \bibinfo{person}{Yinwei Dai}, \bibinfo{person}{Sanjay Singapuram}, \bibinfo{person}{Jiachen Liu}, \bibinfo{person}{Xiangfeng Zhu}, \bibinfo{person}{Harsha Madhyastha}, {and} \bibinfo{person}{Mosharaf Chowdhury}.} \bibinfo{year}{2022}\natexlab{}.
\newblock \showarticletitle{Fedscale: Benchmarking model and system performance of federated learning at scale}. In \bibinfo{booktitle}{\emph{International conference on machine learning}}. PMLR, \bibinfo{pages}{11814--11827}.
\newblock


\bibitem[Lai et~al\mbox{.}(2021)]%
        {oort}
\bibfield{author}{\bibinfo{person}{Fan Lai}, \bibinfo{person}{Xiangfeng Zhu}, \bibinfo{person}{Harsha~V. Madhyastha}, {and} \bibinfo{person}{Mosharaf Chowdhury}.} \bibinfo{year}{2021}\natexlab{}.
\newblock \showarticletitle{Oort: Efficient Federated Learning via Guided Participant Selection}. In \bibinfo{booktitle}{\emph{15th {USENIX} Symposium on Operating Systems Design and Implementation ({OSDI} 21)}}. \bibinfo{publisher}{{USENIX} Association}, \bibinfo{pages}{19--35}.
\newblock
\showISBNx{978-1-939133-22-9}


\bibitem[Lee et~al\mbox{.}(2021)]%
        {lee2021robust}
\bibfield{author}{\bibinfo{person}{Hunmin Lee}, \bibinfo{person}{Yueyang Liu}, \bibinfo{person}{Donghyun Kim}, {and} \bibinfo{person}{Yingshu Li}.} \bibinfo{year}{2021}\natexlab{}.
\newblock \showarticletitle{Robust convergence in federated learning through label-wise clustering}.
\newblock \bibinfo{journal}{\emph{arXiv preprint arXiv:2112.14244}} (\bibinfo{year}{2021}).
\newblock


\bibitem[Lee and Seo(2023)]%
        {fedlc_lee_seo}
\bibfield{author}{\bibinfo{person}{Hunmin Lee} {and} \bibinfo{person}{Daehee Seo}.} \bibinfo{year}{2023}\natexlab{}.
\newblock \showarticletitle{FedLC: Optimizing Federated Learning in Non-IID Data via Label-Wise Clustering}.
\newblock \bibinfo{journal}{\emph{IEEE Access}}  \bibinfo{volume}{11} (\bibinfo{year}{2023}), \bibinfo{pages}{42082--42095}.
\newblock
\urldef\tempurl%
\url{https://doi.org/10.1109/ACCESS.2023.3271517}
\showDOI{\tempurl}


\bibitem[Li et~al\mbox{.}(2022b)]%
        {pyramidfl}
\bibfield{author}{\bibinfo{person}{Chenning Li}, \bibinfo{person}{Xiao Zeng}, \bibinfo{person}{Mi Zhang}, {and} \bibinfo{person}{Zhichao Cao}.} \bibinfo{year}{2022}\natexlab{b}.
\newblock \showarticletitle{PyramidFL: A fine-grained client selection framework for efficient federated learning}. In \bibinfo{booktitle}{\emph{Proceedings of the 28th Annual International Conference on Mobile Computing And Networking}}. \bibinfo{pages}{158--171}.
\newblock


\bibitem[Li et~al\mbox{.}(2021)]%
        {modelcontrasive}
\bibfield{author}{\bibinfo{person}{Qinbin Li}, \bibinfo{person}{Bingsheng He}, {and} \bibinfo{person}{Dawn Song}.} \bibinfo{year}{2021}\natexlab{}.
\newblock \showarticletitle{Model-contrastive federated learning}. In \bibinfo{booktitle}{\emph{Proceedings of the IEEE/CVF conference on computer vision and pattern recognition}}. \bibinfo{pages}{10713--10722}.
\newblock


\bibitem[Li et~al\mbox{.}(2020b)]%
        {fedprox}
\bibfield{author}{\bibinfo{person}{Tian Li}, \bibinfo{person}{Anit~Kumar Sahu}, \bibinfo{person}{Manzil Zaheer}, \bibinfo{person}{Maziar Sanjabi}, \bibinfo{person}{Ameet Talwalkar}, {and} \bibinfo{person}{Virginia Smith}.} \bibinfo{year}{2020}\natexlab{b}.
\newblock \showarticletitle{Federated Optimization in Heterogeneous Networks}. In \bibinfo{booktitle}{\emph{Proceedings of Machine Learning and Systems}}, \bibfield{editor}{\bibinfo{person}{I.~Dhillon}, \bibinfo{person}{D.~Papailiopoulos}, {and} \bibinfo{person}{V.~Sze}} (Eds.), Vol.~\bibinfo{volume}{2}. \bibinfo{pages}{429--450}.
\newblock


\bibitem[Li et~al\mbox{.}(2020c)]%
        {qfedavg}
\bibfield{author}{\bibinfo{person}{Tian Li}, \bibinfo{person}{Maziar Sanjabi}, \bibinfo{person}{Ahmad Beirami}, {and} \bibinfo{person}{Virginia Smith}.} \bibinfo{year}{2020}\natexlab{c}.
\newblock \showarticletitle{Fair Resource Allocation in Federated Learning}. In \bibinfo{booktitle}{\emph{8th International Conference on Learning Representations, {ICLR} 2020, Addis Ababa, Ethiopia, April 26-30, 2020}}. \bibinfo{publisher}{OpenReview.net}.
\newblock
\urldef\tempurl%
\url{https://openreview.net/forum?id=ByexElSYDr}
\showURL{%
\tempurl}


\bibitem[Li et~al\mbox{.}(2020a)]%
        {li_ConvergenceFedAvg_2020}
\bibfield{author}{\bibinfo{person}{Xiang Li}, \bibinfo{person}{Kaixuan Huang}, \bibinfo{person}{Wenhao Yang}, \bibinfo{person}{Shusen Wang}, {and} \bibinfo{person}{Zhihua Zhang}.} \bibinfo{year}{2020}\natexlab{a}.
\newblock \showarticletitle{On the Convergence of {{FedAvg}} on Non-{{IID}} Data}. In \bibinfo{booktitle}{\emph{8th International Conference on Learning Representations, {{ICLR}} 2020, Addis Ababa, Ethiopia, April 26-30, 2020}}. \bibinfo{publisher}{OpenReview.net}.
\newblock


\bibitem[Li et~al\mbox{.}(2022a)]%
        {dataheteroiniot}
\bibfield{author}{\bibinfo{person}{Zonghang Li}, \bibinfo{person}{Yihong He}, \bibinfo{person}{Hongfang Yu}, \bibinfo{person}{Jiawen Kang}, \bibinfo{person}{Xiaoping Li}, \bibinfo{person}{Zenglin Xu}, {and} \bibinfo{person}{Dusit Niyato}.} \bibinfo{year}{2022}\natexlab{a}.
\newblock \showarticletitle{Data heterogeneity-robust federated learning via group client selection in industrial IoT}.
\newblock \bibinfo{journal}{\emph{IEEE Internet of Things Journal}} \bibinfo{volume}{9}, \bibinfo{number}{18} (\bibinfo{year}{2022}), \bibinfo{pages}{17844--17857}.
\newblock


\bibitem[Lin(1991)]%
        {jsdistance}
\bibfield{author}{\bibinfo{person}{Jianhua Lin}.} \bibinfo{year}{1991}\natexlab{}.
\newblock \showarticletitle{Divergence measures based on the Shannon entropy}.
\newblock \bibinfo{journal}{\emph{IEEE Transactions on Information Theory}} \bibinfo{volume}{37}, \bibinfo{number}{1} (\bibinfo{year}{1991}), \bibinfo{pages}{145--151}.
\newblock
\urldef\tempurl%
\url{https://doi.org/10.1109/18.61115}
\showDOI{\tempurl}


\bibitem[Lipton et~al\mbox{.}(2018)]%
        {labelshift_detecting}
\bibfield{author}{\bibinfo{person}{Zachary Lipton}, \bibinfo{person}{Yu-Xiang Wang}, {and} \bibinfo{person}{Alexander Smola}.} \bibinfo{year}{2018}\natexlab{}.
\newblock \showarticletitle{Detecting and correcting for label shift with black box predictors}. In \bibinfo{booktitle}{\emph{International conference on machine learning}}. PMLR, \bibinfo{pages}{3122--3130}.
\newblock


\bibitem[Liu et~al\mbox{.}(2023)]%
        {auxo}
\bibfield{author}{\bibinfo{person}{Jiachen Liu}, \bibinfo{person}{Fan Lai}, \bibinfo{person}{Yinwei Dai}, \bibinfo{person}{Aditya Akella}, \bibinfo{person}{Harsha~V Madhyastha}, {and} \bibinfo{person}{Mosharaf Chowdhury}.} \bibinfo{year}{2023}\natexlab{}.
\newblock \showarticletitle{Auxo: Efficient federated learning via scalable client clustering}. In \bibinfo{booktitle}{\emph{Proceedings of the 2023 ACM Symposium on Cloud Computing}}. \bibinfo{pages}{125--141}.
\newblock


\bibitem[Lyu et~al\mbox{.}(2024)]%
        {flprivacy}
\bibfield{author}{\bibinfo{person}{Lingjuan Lyu}, \bibinfo{person}{Han Yu}, \bibinfo{person}{Xingjun Ma}, \bibinfo{person}{Chen Chen}, \bibinfo{person}{Lichao Sun}, \bibinfo{person}{Jun Zhao}, \bibinfo{person}{Qiang Yang}, {and} \bibinfo{person}{Philip~S. Yu}.} \bibinfo{year}{2024}\natexlab{}.
\newblock \showarticletitle{Privacy and Robustness in Federated Learning: Attacks and Defenses}.
\newblock \bibinfo{journal}{\emph{IEEE Transactions on Neural Networks and Learning Systems}} \bibinfo{volume}{35}, \bibinfo{number}{7} (\bibinfo{year}{2024}), \bibinfo{pages}{8726--8746}.
\newblock
\urldef\tempurl%
\url{https://doi.org/10.1109/TNNLS.2022.3216981}
\showDOI{\tempurl}


\bibitem[Ma et~al\mbox{.}(2018)]%
        {shufflenetv2}
\bibfield{author}{\bibinfo{person}{Ningning Ma}, \bibinfo{person}{Xiangyu Zhang}, \bibinfo{person}{Hai-Tao Zheng}, {and} \bibinfo{person}{Jian Sun}.} \bibinfo{year}{2018}\natexlab{}.
\newblock \showarticletitle{Shufflenet v2: Practical guidelines for efficient cnn architecture design}. In \bibinfo{booktitle}{\emph{Proceedings of the European conference on computer vision (ECCV)}}. \bibinfo{pages}{116--131}.
\newblock


\bibitem[Mallick et~al\mbox{.}(2022)]%
        {matchmaker}
\bibfield{author}{\bibinfo{person}{Ankur Mallick}, \bibinfo{person}{Kevin Hsieh}, \bibinfo{person}{Behnaz Arzani}, {and} \bibinfo{person}{Gauri Joshi}.} \bibinfo{year}{2022}\natexlab{}.
\newblock \showarticletitle{Matchmaker: Data drift mitigation in machine learning for large-scale systems}.
\newblock \bibinfo{journal}{\emph{Proceedings of Machine Learning and Systems}}  \bibinfo{volume}{4} (\bibinfo{year}{2022}), \bibinfo{pages}{77--94}.
\newblock


\bibitem[Mansour et~al\mbox{.}(2020)]%
        {personal_fl_approaches}
\bibfield{author}{\bibinfo{person}{Y. Mansour}, \bibinfo{person}{Mehryar Mohri}, \bibinfo{person}{Jae Ro}, {and} \bibinfo{person}{Ananda~Theertha Suresh}.} \bibinfo{year}{2020}\natexlab{}.
\newblock \showarticletitle{Three Approaches for Personalization with Applications to Federated Learning}.
\newblock \bibinfo{journal}{\emph{ArXiv}}  \bibinfo{volume}{abs/2002.10619} (\bibinfo{year}{2020}).
\newblock
\urldef\tempurl%
\url{https://api.semanticscholar.org/CorpusID:211296702}
\showURL{%
\tempurl}


\bibitem[Marfoq et~al\mbox{.}(2023)]%
        {fldatastream}
\bibfield{author}{\bibinfo{person}{Othmane Marfoq}, \bibinfo{person}{Giovanni Neglia}, \bibinfo{person}{Laetitia Kameni}, {and} \bibinfo{person}{Richard Vidal}.} \bibinfo{year}{2023}\natexlab{}.
\newblock \showarticletitle{Federated learning for data streams}. In \bibinfo{booktitle}{\emph{International Conference on Artificial Intelligence and Statistics}}. PMLR, \bibinfo{pages}{8889--8924}.
\newblock


\bibitem[McMahan et~al\mbox{.}(2017)]%
        {fldecentralized}
\bibfield{author}{\bibinfo{person}{Brendan McMahan}, \bibinfo{person}{Eider Moore}, \bibinfo{person}{Daniel Ramage}, \bibinfo{person}{Seth Hampson}, {and} \bibinfo{person}{Blaise~Aguera y Arcas}.} \bibinfo{year}{2017}\natexlab{}.
\newblock \showarticletitle{Communication-efficient learning of deep networks from decentralized data}. In \bibinfo{booktitle}{\emph{Artificial intelligence and statistics}}. PMLR, \bibinfo{pages}{1273--1282}.
\newblock


\bibitem[Moreno-Torres et~al\mbox{.}(2012)]%
        {unifying_datashift}
\bibfield{author}{\bibinfo{person}{Jose~G Moreno-Torres}, \bibinfo{person}{Troy Raeder}, \bibinfo{person}{Roc{\'\i}o Alaiz-Rodr{\'\i}guez}, \bibinfo{person}{Nitesh~V Chawla}, {and} \bibinfo{person}{Francisco Herrera}.} \bibinfo{year}{2012}\natexlab{}.
\newblock \showarticletitle{A unifying view on dataset shift in classification}.
\newblock \bibinfo{journal}{\emph{Pattern recognition}} \bibinfo{volume}{45}, \bibinfo{number}{1} (\bibinfo{year}{2012}), \bibinfo{pages}{521--530}.
\newblock


\bibitem[Nandi and Xhafa(2022)]%
        {flmultimodalemotion}
\bibfield{author}{\bibinfo{person}{Arijit Nandi} {and} \bibinfo{person}{Fatos Xhafa}.} \bibinfo{year}{2022}\natexlab{}.
\newblock \showarticletitle{A federated learning method for real-time emotion state classification from multi-modal streaming}.
\newblock \bibinfo{journal}{\emph{Methods}}  \bibinfo{volume}{204} (\bibinfo{year}{2022}), \bibinfo{pages}{340--347}.
\newblock


\bibitem[Pollard et~al\mbox{.}(2018)]%
        {eicu}
\bibfield{author}{\bibinfo{person}{Tom~J Pollard}, \bibinfo{person}{Alistair~EW Johnson}, \bibinfo{person}{Jesse~D Raffa}, \bibinfo{person}{Leo~A Celi}, \bibinfo{person}{Roger~G Mark}, {and} \bibinfo{person}{Omar Badawi}.} \bibinfo{year}{2018}\natexlab{}.
\newblock \showarticletitle{The eICU Collaborative Research Database, a freely available multi-center database for critical care research}.
\newblock \bibinfo{journal}{\emph{Scientific data}} \bibinfo{volume}{5}, \bibinfo{number}{1} (\bibinfo{year}{2018}), \bibinfo{pages}{1--13}.
\newblock


\bibitem[Reddi et~al\mbox{.}(2021)]%
        {fedyogi}
\bibfield{author}{\bibinfo{person}{Sashank~J. Reddi}, \bibinfo{person}{Zachary Charles}, \bibinfo{person}{Manzil Zaheer}, \bibinfo{person}{Zachary Garrett}, \bibinfo{person}{Keith Rush}, \bibinfo{person}{Jakub Kone{\v{c}}n{\'y}}, \bibinfo{person}{Sanjiv Kumar}, {and} \bibinfo{person}{Hugh~Brendan McMahan}.} \bibinfo{year}{2021}\natexlab{}.
\newblock \showarticletitle{Adaptive Federated Optimization}. In \bibinfo{booktitle}{\emph{International Conference on Learning Representations}}.
\newblock


\bibitem[Rousseeuw(1987)]%
        {silhouettes}
\bibfield{author}{\bibinfo{person}{Peter~J Rousseeuw}.} \bibinfo{year}{1987}\natexlab{}.
\newblock \showarticletitle{Silhouettes: a graphical aid to the interpretation and validation of cluster analysis}.
\newblock \bibinfo{journal}{\emph{Journal of computational and applied mathematics}}  \bibinfo{volume}{20} (\bibinfo{year}{1987}), \bibinfo{pages}{53--65}.
\newblock


\bibitem[Sattler et~al\mbox{.}(2021)]%
        {CFL_modelagnostic}
\bibfield{author}{\bibinfo{person}{Felix Sattler}, \bibinfo{person}{Klaus-Robert Müller}, {and} \bibinfo{person}{Wojciech Samek}.} \bibinfo{year}{2021}\natexlab{}.
\newblock \showarticletitle{Clustered Federated Learning: Model-Agnostic Distributed Multitask Optimization Under Privacy Constraints}.
\newblock \bibinfo{journal}{\emph{IEEE Transactions on Neural Networks and Learning Systems}} \bibinfo{volume}{32}, \bibinfo{number}{8} (\bibinfo{year}{2021}), \bibinfo{pages}{3710--3722}.
\newblock
\urldef\tempurl%
\url{https://doi.org/10.1109/TNNLS.2020.3015958}
\showDOI{\tempurl}


\bibitem[Sattler et~al\mbox{.}(2020)]%
        {robustflfromnoniid}
\bibfield{author}{\bibinfo{person}{Felix Sattler}, \bibinfo{person}{Simon Wiedemann}, \bibinfo{person}{Klaus-Robert Müller}, {and} \bibinfo{person}{Wojciech Samek}.} \bibinfo{year}{2020}\natexlab{}.
\newblock \showarticletitle{Robust and Communication-Efficient Federated Learning From Non-i.i.d. Data}.
\newblock \bibinfo{journal}{\emph{IEEE Transactions on Neural Networks and Learning Systems}} \bibinfo{volume}{31}, \bibinfo{number}{9} (\bibinfo{year}{2020}), \bibinfo{pages}{3400--3413}.
\newblock
\urldef\tempurl%
\url{https://doi.org/10.1109/TNNLS.2019.2944481}
\showDOI{\tempurl}


\bibitem[Serhani et~al\mbox{.}(2023)]%
        {dss_fl}
\bibfield{author}{\bibinfo{person}{Mohamed~Adel Serhani}, \bibinfo{person}{Haftay~Gebreslasie Abreha}, \bibinfo{person}{Asadullah Tariq}, \bibinfo{person}{Mohammad Hayajneh}, \bibinfo{person}{Yang Xu}, {and} \bibinfo{person}{Kadhim Hayawi}.} \bibinfo{year}{2023}\natexlab{}.
\newblock \showarticletitle{Dynamic Data Sample Selection and Scheduling in Edge Federated Learning}.
\newblock \bibinfo{journal}{\emph{IEEE Open Journal of the Communications Society}}  \bibinfo{volume}{4} (\bibinfo{year}{2023}), \bibinfo{pages}{2133--2149}.
\newblock
\urldef\tempurl%
\url{https://doi.org/10.1109/OJCOMS.2023.3313257}
\showDOI{\tempurl}


\bibitem[Sun et~al\mbox{.}(2020)]%
        {waymo}
\bibfield{author}{\bibinfo{person}{Pei Sun}, \bibinfo{person}{Henrik Kretzschmar}, \bibinfo{person}{Xerxes Dotiwalla}, \bibinfo{person}{Aurelien Chouard}, \bibinfo{person}{Vijaysai Patnaik}, \bibinfo{person}{Paul Tsui}, \bibinfo{person}{James Guo}, \bibinfo{person}{Yin Zhou}, \bibinfo{person}{Yuning Chai}, \bibinfo{person}{Benjamin Caine}, \bibinfo{person}{Vijay Vasudevan}, \bibinfo{person}{Wei Han}, \bibinfo{person}{Jiquan Ngiam}, \bibinfo{person}{Hang Zhao}, \bibinfo{person}{Aleksei Timofeev}, \bibinfo{person}{Scott Ettinger}, \bibinfo{person}{Maxim Krivokon}, \bibinfo{person}{Amy Gao}, \bibinfo{person}{Aditya Joshi}, \bibinfo{person}{Yu Zhang}, \bibinfo{person}{Jonathon Shlens}, \bibinfo{person}{Zhifeng Chen}, {and} \bibinfo{person}{Dragomir Anguelov}.} \bibinfo{year}{2020}\natexlab{}.
\newblock \showarticletitle{Scalability in Perception for Autonomous Driving: Waymo Open Dataset}. In \bibinfo{booktitle}{\emph{Proceedings of the IEEE/CVF Conference on Computer Vision and Pattern Recognition (CVPR)}}.
\newblock


\bibitem[Wen et~al\mbox{.}(2023)]%
        {flsurvey}
\bibfield{author}{\bibinfo{person}{Jie Wen}, \bibinfo{person}{Zhixia Zhang}, \bibinfo{person}{Yang Lan}, \bibinfo{person}{Zhihua Cui}, \bibinfo{person}{Jianghui Cai}, {and} \bibinfo{person}{Wensheng Zhang}.} \bibinfo{year}{2023}\natexlab{}.
\newblock \showarticletitle{A survey on federated learning: challenges and applications}.
\newblock \bibinfo{journal}{\emph{International Journal of Machine Learning and Cybernetics}} \bibinfo{volume}{14}, \bibinfo{number}{2} (\bibinfo{year}{2023}), \bibinfo{pages}{513--535}.
\newblock


\bibitem[Xu et~al\mbox{.}(2024)]%
        {fwdllm}
\bibfield{author}{\bibinfo{person}{Mengwei Xu}, \bibinfo{person}{Dongqi Cai}, \bibinfo{person}{Yaozong Wu}, \bibinfo{person}{Xiang Li}, {and} \bibinfo{person}{Shangguang Wang}.} \bibinfo{year}{2024}\natexlab{}.
\newblock \showarticletitle{$\{$FwdLLM$\}$: Efficient Federated Finetuning of Large Language Models with Perturbed Inferences}. In \bibinfo{booktitle}{\emph{2024 USENIX Annual Technical Conference (USENIX ATC 24)}}. \bibinfo{pages}{579--596}.
\newblock


\bibitem[Yang et~al\mbox{.}(2021)]%
        {characterizehetero}
\bibfield{author}{\bibinfo{person}{Chengxu Yang}, \bibinfo{person}{Qipeng Wang}, \bibinfo{person}{Mengwei Xu}, \bibinfo{person}{Zhenpeng Chen}, \bibinfo{person}{Kaigui Bian}, \bibinfo{person}{Yunxin Liu}, {and} \bibinfo{person}{Xuanzhe Liu}.} \bibinfo{year}{2021}\natexlab{}.
\newblock \showarticletitle{Characterizing impacts of heterogeneity in federated learning upon large-scale smartphone data}. In \bibinfo{booktitle}{\emph{Proceedings of the Web Conference 2021}}. \bibinfo{pages}{935--946}.
\newblock


\bibitem[Yang et~al\mbox{.}(2018)]%
        {googlekeyboard}
\bibfield{author}{\bibinfo{person}{Timothy Yang}, \bibinfo{person}{Galen Andrew}, \bibinfo{person}{Hubert Eichner}, \bibinfo{person}{Haicheng Sun}, \bibinfo{person}{Wei Li}, \bibinfo{person}{Nicholas Kong}, \bibinfo{person}{Daniel Ramage}, {and} \bibinfo{person}{Fran{\c{c}}oise Beaufays}.} \bibinfo{year}{2018}\natexlab{}.
\newblock \showarticletitle{Applied federated learning: Improving google keyboard query suggestions}.
\newblock \bibinfo{journal}{\emph{arXiv preprint arXiv:1812.02903}} (\bibinfo{year}{2018}).
\newblock


\bibitem[Zhang et~al\mbox{.}(2022)]%
        {fedlc_zhang}
\bibfield{author}{\bibinfo{person}{Jie Zhang}, \bibinfo{person}{Zhiqi Li}, \bibinfo{person}{Bo Li}, \bibinfo{person}{Jianghe Xu}, \bibinfo{person}{Shuang Wu}, \bibinfo{person}{Shouhong Ding}, {and} \bibinfo{person}{Chao Wu}.} \bibinfo{year}{2022}\natexlab{}.
\newblock \showarticletitle{Federated Learning with Label Distribution Skew via Logits Calibration}. In \bibinfo{booktitle}{\emph{Proceedings of the 39th International Conference on Machine Learning}} \emph{(\bibinfo{series}{Proceedings of Machine Learning Research}, Vol.~\bibinfo{volume}{162})}, \bibfield{editor}{\bibinfo{person}{Kamalika Chaudhuri}, \bibinfo{person}{Stefanie Jegelka}, \bibinfo{person}{Le~Song}, \bibinfo{person}{Csaba Szepesvari}, \bibinfo{person}{Gang Niu}, {and} \bibinfo{person}{Sivan Sabato}} (Eds.). \bibinfo{publisher}{PMLR}, \bibinfo{pages}{26311--26329}.
\newblock
\urldef\tempurl%
\url{https://proceedings.mlr.press/v162/zhang22p.html}
\showURL{%
\tempurl}


\bibitem[Zhang and Lv(2021)]%
        {fedlabcluster}
\bibfield{author}{\bibinfo{person}{Jianfei Zhang} {and} \bibinfo{person}{Shuaishuai Lv}.} \bibinfo{year}{2021}\natexlab{}.
\newblock \showarticletitle{FedLabCluster: A Clustered Federated Learning Algorithm Based on Data Sample Label}. In \bibinfo{booktitle}{\emph{2021 International Conference on Electronic Information Engineering and Computer Science (EIECS)}}. \bibinfo{pages}{423--428}.
\newblock
\urldef\tempurl%
\url{https://doi.org/10.1109/EIECS53707.2021.9588126}
\showDOI{\tempurl}


\bibitem[Zhang et~al\mbox{.}(2024)]%
        {fedac}
\bibfield{author}{\bibinfo{person}{Yuxin Zhang}, \bibinfo{person}{Haoyu Chen}, \bibinfo{person}{Zheng Lin}, \bibinfo{person}{Zhe Chen}, {and} \bibinfo{person}{Jin Zhao}.} \bibinfo{year}{2024}\natexlab{}.
\newblock \showarticletitle{FedAC: A Adaptive Clustered Federated Learning Framework for Heterogeneous Data}.
\newblock \bibinfo{journal}{\emph{arXiv preprint arXiv:2403.16460}} (\bibinfo{year}{2024}).
\newblock


\bibitem[Zhao et~al\mbox{.}(2018)]%
        {noniddwithsharing}
\bibfield{author}{\bibinfo{person}{Yue Zhao}, \bibinfo{person}{Meng Li}, \bibinfo{person}{Liangzhen Lai}, \bibinfo{person}{Naveen Suda}, \bibinfo{person}{Damon Civin}, {and} \bibinfo{person}{Vikas Chandra}.} \bibinfo{year}{2018}\natexlab{}.
\newblock \showarticletitle{Federated learning with non-iid data}.
\newblock \bibinfo{journal}{\emph{arXiv preprint arXiv:1806.00582}} (\bibinfo{year}{2018}).
\newblock


\bibitem[Zhou et~al\mbox{.}(2023)]%
        {medicalimage}
\bibfield{author}{\bibinfo{person}{Lisang Zhou}, \bibinfo{person}{Meng Wang}, {and} \bibinfo{person}{Ning Zhou}.} \bibinfo{year}{2023}\natexlab{}.
\newblock \showarticletitle{Distributed Federated Learning-Based Deep Learning Model for Privacy MRI Brain Tumor Detection}.
\newblock \bibinfo{journal}{\emph{Journal of Information, Technology and Policy}} (\bibinfo{year}{2023}), \bibinfo{pages}{1--12}.
\newblock


\end{thebibliography}


\newpage

\appendix

\sloppy
\section{Proof of convergence}
\label{app:convergence}

\begin{algorithm2e}[ht!]
    \caption{Clustered Federated Learning with Data Drift}
    \label{alg:clustered_sgd}
    \textbf{input:} the number of clusters $K$, the number of data drift events $T$, the number of sampled clients per round $M$, the number of rounds per data drift event $R$, reclustering threshold $\Delta$ \\
    Initialize cluster models $\C{1}{0}, \ldots, \C{K}{0}$ with the same model $\x{0}$ \\
    \For{$t = 0, \ldots, T-1$} {
        \For{each client $i$} {
            Adopt data drift for client $i$ \\
            Reassign client $i$ to the closest cluster \\
            Let $\x{i}$ be the model corresponding to the cluster containing client $i$
        }
        Let $\repr{1}{t}, \repr{2}{t}, \ldots$ be the client representations \\
        \eIf{there exists a cluster $C$ such that $\dist(\repr{i}{t}, \repr{j}{t}) > \Delta$ for clients $i,j \in C$} {
            Let $C_1, \ldots, C_K$ be the $K$-center clustering of all clients based on representations
        } {
            Let $C_1, \ldots, C_K$ be the current clusters
        }
        \For{$k=1,\ldots, K$} {
            $\tC{k}{0} \gets \avg_{i \in C_k} \x{i}$ \hfill\algcomment{Cluster model is the average of clients' models}
        }
        \For{$\tau = 0, \ldots, R - 1$} {
            \For{$k=1,\ldots, K$} {
                Sample subset $S$ from $C_k$ of size $M / K$ independently with replacement \\
                $\tC{k}{\tau + 1} \gets \tC{k}{\tau} - \step \avg_{i \in S} \G{i}{t}(\tC{k}{\tau})$
            }
        }
        \For{$k=1,\ldots, K$} {
            $\C{k}{t + 1} \gets \tC{k}{R}$
        }
    }
\end{algorithm2e}

In this section, we analyze the performance of our framework in the case when the number of clusters $K$ is known and the number of local iterations is $1$.
The analysis can be modified to handle local iterations and other variations based on the previous work, see e.g.~\citet{li_ConvergenceFedAvg_2020}.
To simplify the proof, in the algorithm, we make the following changes compared with the algorithm from the main body: 1) we use the $k$-center clustering instead of $k$-means clustering; 2) we perform reclustering when the intra-clsuter heterogeneity exceeds a certain threshold. We compare this threshold condition with the one from the main body in \Cref{app:reclustering_threshold}.

We present the algorithm in \Cref{alg:clustered_sgd}.
After every data drift event, we first accept data changes for each client.
After that, if the heterogeneity within some clsuter exceeds a certain threshold, we recluster the clients, and assign to each cluster a model by averaging models of all clients in the cluster.
After that, for several rounds, we perform standard federated averaging updates on each cluster: we sample $M$ clients, compute the stochastic gradient for each client, and update the cluster model with the sampled gradients.

\begin{assumption}[Objective functions]
    \label{ass:obj}
    Let $\f{i}{t}$ be the local function at each client $i$ at data drift event $t$.
    Then, for all $i$ and $t$:
    \begin{itemize}
        \item $\f{i}{t}$ is $L$-smooth: $\|\nabla \f{i}{t}(\vx) - \nabla \f{i}{t}(\vx)\| \le L \norm{\vx - \vy}$ for all $\vx, \vy$
        \item $\f{i}{t}$ satisfies $\mu$-Polyak-\L{}ojasiewicz ($\mu$-PL) condition: $\|\f{i}{t}(\vx)\|^2 \ge 2 \mu (\f{i}{t}(\vx) - \f{i}{t}(\vx^*))$, where $\vx^*$ is the minimizer of $\f{i}{t}$.
    \end{itemize}
\end{assumption}
The $\mu$-PL condition is a relaxation of the strong convexity assumption; both asumptions are standard in the federated learning literature~\citep{cho_ConvergenceFederated_2023, li_ConvergenceFedAvg_2020, cho_ClientSelection_2020}.
The following are standard assumptions on stochastic gradient.
\begin{assumption}[Stochastic gradients]
    \label{ass:sg}
    For each client $i$, at every data drift event $t$, let $\G{i}{t}$ be the stochastic gradient oracle:
    \begin{compactitem}
        \item $\Exp{\G{i}{t}(\vx)} = \nabla \f{i}{t}(\vx)$ for all $\vx$;
        \item $\ExpSqrNorm{\G{i}{t}(\vx) - \nabla \f{i}{t}(\vx)} \le \sigma^2$ for all $\vx$.
    \end{compactitem}
\end{assumption}
The next assumption connects the objective value with client representations.
Client representations can take various forms, such as label distributions, input embeddings, or gradients.
\begin{assumption}[Representation]
    \label{ass:representation}
    For a metric space $(X, \dist)$, let $\repr{i}{t} \in X$ be a representation of client $i$ at time $t$.
    Then there exists $\theta$ such that $\abs{\f{i}{t}(\vx) - \f{j}{t}(\vx)} \le \theta \cdot \dist\pars{\repr{i}{t}, \repr{j}{t}}$ for all $i,j,t$.
\end{assumption}
%
We next give the intuition of why the assumption is natural for the label-based and the embedding-based distnaces.
If the concept $P(\text{output} | \text{input})$ doesn't change between clients, every model achieves the same expected error on each class.
For each class, assuming that the class data for different clients is sampled from the same distribution, we get the same expected loss on this class for different clients.
Hence, the overall difference in the loss function is largely determined by the fraction of each class in the clients' data.
Similarly, if the client embedding faithfully captures information about its inputs, assuming the trained models depend on the inputs continuously, they have the same expected error on similar inputs.

Clearly, if data on clients can change arbitrarily at every data drift event, in general, there is no benefit in reusing previous iterates.
Hence, our next assumption bounds how much the client changes after the drift.
\begin{assumption}[Data drift]
    \label{ass:drift}
    At every data drift event, the representation of each client changes by at most $\delta$: $\dist\pars{\repr{i}{t}, \repr{i}{t+1}} \le \delta$ for all $i$, $t$.
\end{assumption}
This assumption naturally holds in case when, for each client, only a small fraction of data points changes at every round.
Finally, we assume that the clients are clusterable: there exist $K$ clusters so that within each cluster all points have similar representation.
\begin{assumption}[Clustering]
    \label{ass:clustering}
    At every data drift event, the clients can be partitioned into $K$ clusters $C_1, \ldots, C_K$ so that, for any $k \in [K]$ and any $i,j \in C_k$, we have $\dist\pars{\repr{i}{t}, \repr{j}{t}} \le \Delta$.
\end{assumption}
For completeness, we present the analysis of SGD convergence for functions satisfying $\mu$-PL condition.
\begin{lemma}
    \label{lem:standard_convergence}
    Let $f$ be $L$-smooth and $\mu$-strongly convex.
    Let $g$ be an unbiased stochastic gradient oracle of $f$ with variance $\sigma^2$.
    Let $\vx^*$ be the minimizer of $f$.
    Then, for the stochastic gradient update rule $\x{t+1} \gets \x{t} - \step g(\x{t})$ with $\step \le 1/L$, we have
    \begin{align*}
        \Exp{f(\x{T}) - f(\vx^*)}
        &\le \pars{1 - \step \mu}^T (f(\x{0}) - f(\vx^*)) + \frac{L \step}{2 \mu} \sigma^2.
    \end{align*}
\end{lemma}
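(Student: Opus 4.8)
The plan is to establish a one-step recursion on the expected suboptimality $\Exp{f(\x{t}) - f(\vx^*)}$ and then unroll it as a geometric sequence. Although the lemma is stated for $\mu$-strongly convex $f$, the only consequence I will actually use is the gradient-domination inequality $\SqrNorm{\nabla f(\x{t})} \ge 2\mu\pars{f(\x{t}) - f(\vx^*)}$, which is exactly the $\mu$-PL condition of \Cref{ass:obj}; thus the argument goes through verbatim in the paper's setting.

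First I would invoke the descent inequality implied by $L$-smoothness, namely $f(\x{t+1}) \le f(\x{t}) + \InnerProd{\nabla f(\x{t}), \x{t+1} - \x{t}} + \tfrac{L}{2}\SqrNorm{\x{t+1}-\x{t}}$, and substitute the update $\x{t+1} - \x{t} = -\step\, g(\x{t})$. Taking the conditional expectation given $\x{t}$ and using unbiasedness ($\Exp{g(\x{t})} = \nabla f(\x{t})$) together with the variance decomposition $\Exp{\SqrNorm{g(\x{t})}} = \SqrNorm{\nabla f(\x{t})} + \Exp{\SqrNorm{g(\x{t}) - \nabla f(\x{t})}} \le \SqrNorm{\nabla f(\x{t})} + \sigma^2$ yields
\[
\Exp{f(\x{t+1}) \mid \x{t}} \le f(\x{t}) - \step\pars{1 - \tfrac{L\step}{2}}\SqrNorm{\nabla f(\x{t})} + \tfrac{L\step^2}{2}\sigma^2.
\]

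Next I would use the hypothesis $\step \le 1/L$ to bound $1 - \tfrac{L\step}{2} \ge \tfrac12$, so the gradient term is at most $-\tfrac{\step}{2}\SqrNorm{\nabla f(\x{t})}$; applying the PL inequality then converts this into the contraction $-\step\mu\pars{f(\x{t}) - f(\vx^*)}$. Subtracting $f(\vx^*)$ and taking total expectation gives the recursion $a_{t+1} \le (1-\step\mu)\,a_t + \tfrac{L\step^2\sigma^2}{2}$, where $a_t := \Exp{f(\x{t}) - f(\vx^*)}$. Unrolling over $T$ steps and bounding the geometric sum $\sum_{j=0}^{T-1}(1-\step\mu)^j \le \tfrac{1}{\step\mu}$ produces $a_T \le (1-\step\mu)^T a_0 + \tfrac{L\step\sigma^2}{2\mu}$, which is exactly the claimed bound.

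This argument is essentially textbook, so there is no genuine obstacle; the only points requiring care are (i) ensuring the quadratic-in-$\step$ smoothness penalty is absorbed by the linear descent term, which is precisely why the hypothesis $\step \le 1/L$ is needed, and (ii) noting that only gradient domination (PL), not full strong convexity, is invoked, so the resulting lemma is directly reusable by the main convergence analysis that relies on \Cref{ass:obj}.
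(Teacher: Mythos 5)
Your proof is correct and takes essentially the same route as the paper's: the descent lemma combined with unbiasedness and the variance decomposition, absorption of the quadratic-in-$\step$ term via $\step \le 1/L$, the $\mu$-PL inequality to obtain the contraction $(1-\step\mu)$, and unrolling the recursion with the geometric-sum bound $\sum_{j}(1-\step\mu)^j \le \tfrac{1}{\step\mu}$. Your remark that only gradient domination (PL), not full strong convexity, is actually used also matches the paper, whose proof likewise invokes only the PL condition despite the lemma's statement.
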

\begin{proof}
    Let $\ExpArg{t}{\cdot}$ be expectation conditioned on $\x{t}$.
    By the Descent Lemma, we have:
    \begin{align*}
        \ExpArg{t}{f(\x{t+1})}
        &\le f(\x{t}) + \ExpArgInnerProd{t}{\nabla f(\x{t}), \x{t+1} - \x{t}} + \frac{L}{2} \ExpArgSqrNorm{t}{\x{t+1} - \x{t}}\\
        &= f(\x{t}) + \ExpArgInnerProd{t}{\nabla f(\x{t}), - \step g(\x{t})} + \frac{L}{2} \ExpArgSqrNorm{t}{\step g(\x{t})}\\
        &= f(\x{t}) - \step \SqrNorm{\nabla f(\x{t})} + \frac{L \step^2}{2} \pars{\SqrNorm{\nabla f(\x{t})} + \sigma^2} \\
        &\le f(\x{t}) - \frac{\step}{2} \SqrNorm{\nabla f(\x{t})} + \frac{L \step^2}{2} \sigma^2,
    \end{align*}
    where in the last inequality we used $\step \le 1/L$.
    Using the fact that $f$ satisfies $\mu$-PL condition, we have $\SqrNorm{\nabla f(\x{t})} \ge 2\mu(f(\x{t}) - f(\vx^*))$, giving
    \begin{align*}
        \ExpArg{t}{f(\x{t+1})}
        &\le f(\x{t}) - \step \mu (f(\x{t}) - f(\vx^*)) + \frac{L \step^2}{2} \sigma^2.
    \end{align*}
    Subtracting $f(\vx^*)$ from both parts, we have
    \begin{align*}
        \ExpArg{t}{f(\x{t+1}) - f(\vx^*)}
        &\le \pars{1 - \step \mu} (f(\x{t}) - f(\vx^*)) + \frac{L \step^2}{2} \sigma^2.
    \end{align*}
    By telescoping, taking the full expectation, and using $\sum_{i=0}^{\infty} \pars{1 - \step \mu}^i = \frac{1}{\step \mu}$, we get
    \begin{align*}
        \Exp{f(\x{T}) - f(\vx^*)}
        &\le \pars{1 - \step \mu}^T (f(\x{0}) - f(\vx^*)) + \frac{L \step}{2 \mu} \sigma^2.
        \qedhere
    \end{align*}
\end{proof}
The above result demosntrates how the objective improves with each round: namely, after $R$ iterations, the non-stochastic part of the loss improves by a factor which depends on $R$ exponentially.
However, each data drift and each reclustering might potentially hurt the objective.
Our next result bounds their effect on the loss.
\begin{lemma}
    \label{lem:with_clustering}
    Let $N$ be the number of clients, and $t+1$ be a fixed data drift event.
    Let $C_1, \ldots, C_k$ be clusters $\vc_1, \ldots, \vc_K$ be cluster models, and $\vc^*_1, \ldots, \vc^*_K$ be the optimal cluster models at the end of processing data drift event $t$.
    Similarly, let $\bar{C}_1, \ldots, \bar{C}_\ell$ be clusters $\bar{\vc}_1, \ldots, \bar{\vc}_K$ be cluster models, and $\bar{\vc}^*_1, \ldots, \bar{\vc}^*_K$ be the optimal cluster models immediately after reclustering.
    Then, under Assumptions~\ref{ass:obj}-\ref{ass:representation}, the following holds:
    \begin{align*}
        \frac{1}{N} \sum_{\ell \in [K]} \sum_{i \in \bar{C}_{\ell}} \pars{\f{i}{t+1}(\bar{\vc}_{\ell}) - \f{i}{t+1}(\bar{\vc}_{\ell}^*)}
        &\le \frac{1}{N} \sum_{k \in [K]} \sum_{i \in C_k} \pars{\f{i}{t}(\vc_{k}) - \f{i}{t}(\vc_k^*)}
        + 3 \theta (\Delta + \delta)
    \end{align*}
\end{lemma}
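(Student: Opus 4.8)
The plan is to route every quantity through the clustering-independent baseline $A_t := \sum_i \min_{\vy} \f{i}{t}(\vy)$, the sum of the individual per-client optima, which is unaffected by how clients are grouped. First I would record the consequences of the assumptions that I will use repeatedly. Reading \Cref{ass:representation} as saying that the objective is essentially a $\theta$-Lipschitz function of the representation, the data-drift bound (\Cref{ass:drift}) gives $|\f{i}{t}(\vx) - \f{i}{t+1}(\vx)| \le \theta\delta$ for every $\vx$, while \Cref{ass:clustering} gives $|\f{i}{t+1}(\vx) - \f{j}{t+1}(\vx)| \le \theta\Delta$ whenever $i,j \in \bar C_\ell$ (and the analogous bound at time $t$ inside each $C_k$); the same inequalities transfer to the minima of the objectives. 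The first reduction is that the post-reclustering cluster optimum can only exceed the sum of the free individual optima, i.e. $\sum_{i \in \bar C_\ell} \f{i}{t+1}(\bar\vc_\ell^*) = \min_\vy \sum_{i \in \bar C_\ell} \f{i}{t+1}(\vy) \ge \sum_{i \in \bar C_\ell} \min_\vy \f{i}{t+1}(\vy)$, so summing over $\ell$ the new suboptimality is at most $\pars{\text{new achieved loss}} - A_{t+1}$.

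The heart of the argument~-- and the step I expect to be the main obstacle~-- is controlling the new achieved loss $\sum_\ell \sum_{i \in \bar C_\ell} \f{i}{t+1}(\bar\vc_\ell)$, because the reclustered model $\bar\vc_\ell = \avg_{j \in \bar C_\ell} \vx_j$ is an average of the \emph{old} trained models $\vx_j = \vc_{k(j)}$, which may originate from different old clusters. To pass the drifted objective through this average without paying a condition-number factor (which would ruin the unit coefficient on the old suboptimality), I would invoke convexity of $\f{i}{t+1}$ and apply Jensen's inequality, $\f{i}{t+1}(\bar\vc_\ell) \le \avg_{j \in \bar C_\ell} \f{i}{t+1}(\vc_{k(j)})$. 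I would then swap the client index $i$ for $j$ inside the new cluster at a cost of $\theta\Delta$ per term, and absorb the drift from $t+1$ back to $t$ at a cost of $\theta\delta$ per term. Summing over $i \in \bar C_\ell$ and then over $\ell$ collapses the average back to a sum over all clients evaluated at their own old models, yielding $\text{new achieved} \le \text{old achieved} + N\theta(\Delta + \delta)$, where $\text{old achieved} = \sum_k \sum_{i \in C_k} \f{i}{t}(\vc_k)$.

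Finally I would bound the old cluster optimum from above, again via the homogeneity inside each $C_k$ at time $t$: choosing one client's individual minimizer as a common test point and paying $\theta\Delta$ twice gives $\text{old cluster optimum} \le A_t + 2N\theta\Delta$, while the drift bound on minima gives $A_{t+1} \ge A_t - N\theta\delta$. Assembling the pieces, the new suboptimality is at most $\pars{\text{new achieved}} - A_{t+1} = \pars{\text{old achieved} - \text{old cluster optimum}} + N\theta(\Delta+\delta) + \pars{\text{old cluster optimum} - A_{t+1}}$, and substituting the two bounds shows the bracketed discrepancy is at most $2N\theta\Delta + N\theta\delta$. This produces an overall additive slack of $N\theta(3\Delta + 2\delta) \le 3N\theta(\Delta+\delta)$ over the old suboptimality; dividing by $N$ gives the claim. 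The one place I would double-check is the passage from \Cref{ass:representation} (stated at a fixed time) to the cross-time inequality and to convexity, since the latter is slightly stronger than the $\mu$-PL hypothesis of \Cref{ass:obj} and is precisely what makes the averaging step clean.
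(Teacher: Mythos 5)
Your proof is correct, and while it shares the two load-bearing ingredients with the paper's argument~-- Jensen's inequality applied to $\bar{\vc}_\ell = \avg_{j}\vx_j$ followed by an index swap at cost $\theta\Delta$, and the absorption of drift at cost $\theta\delta$ per evaluation~-- the decomposition is genuinely different. The paper telescopes per client through the \emph{old cluster optimum} $\vc_k^*$, splitting $\f{i}{t+1}(\bar{\vc}_\ell)-\f{i}{t+1}(\bar{\vc}_\ell^*)$ into $\bigl(\f{i}{t+1}(\bar{\vc}_\ell)-\f{i}{t+1}(\vc_k^*)\bigr)+\bigl(\f{i}{t+1}(\vc_k^*)-\f{i}{t+1}(\bar{\vc}_\ell^*)\bigr)$ and bounding the second piece by chaining through the individual minimizers $\vx_t^{(i,*)}$. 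You instead route everything through the clustering-independent baseline $A_t=\sum_i\min_{\vy}\f{i}{t}(\vy)$, which decouples the ``achieved loss'' bookkeeping from the ``optimum'' bookkeeping and makes the two $\theta\Delta$ charges for comparing a cluster optimum to individual optima explicit. Your accounting yields $\theta(3\Delta+2\delta)$, which sits cleanly inside the claimed $3\theta(\Delta+\delta)$; the paper's per-client tally actually comes to $\theta(2\Delta+4\delta)$, which fits the stated constant only when $\delta\le\Delta$, so your route is arguably tighter as well as tidier. Both arguments share the same two caveats you flag: convexity of $\f{i}{t}$ is invoked for the Jensen step even though \Cref{ass:obj} only states $\mu$-PL (the paper's proof does this too), and \Cref{ass:representation} must be read as licensing cross-time comparisons $|\f{i}{t}(\vx)-\f{i}{t+1}(\vx)|\le\theta\,\dist(\repr{i}{t},\repr{i}{t+1})$, which is how the paper uses it in combination with \Cref{ass:drift}.
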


\begin{proof}
    For a client $i$, let $k$ and $\ell$ be such that $i \in C_k \cap \bar{C}_{\ell}$.
    We rewrite $\f{i}{t}(\bar{\vc}_\ell) - \f{i}{t}(\bar{\vc}_\ell^*)$ as
    \[
        \f{i}{t+1}(\bar{\vc}_\ell) - \f{i}{t+1}(\bar{\vc}_\ell^*)
        = \pars{\f{i}{t+1}(\bar{\vc}_\ell) - \f{i}{t+1}(\vc_k^*)} + \pars{\f{i}{t+1}(\vc_k^*) - \f{i}{t+1}(\bar{\vc}_\ell^*)}
    \]
    and bound each term separately.
    
    \paragraph{Bounding the first term}
    Let $\x{j}$ be the model of client $j$ before reclustering, i.e. $\x{j} = \vc_{k'}$ where $j \in C_{k'}$.
    Then,
    \begin{align}
        \f{i}{t+1}(\bar{\vc}_\ell)
        = \f{i}{t+1} \Bigl(\avg_{j \in \bar{C}_\ell} \x{j}\Bigr)
        \le \avg_{j \in \bar{C}_\ell} \f{i}{t+1}(\x{j}),
        \label{eq:split}
    \end{align}
    where the last inequality follows by convexity of $\f{i}{t}$.
    By \Cref{ass:clustering}, since all clients $\bar{C}_\ell$ belong to the same cluster, their representations differ by at most $\Delta$, and hence by Assumption~\ref{ass:representation} their local objectives differ by at most $\theta \Delta$.
    Therefore,
    \[
        \avg_{j \in \bar{C}_\ell} \f{i}{t+1}(\x{j})
        \le \avg_{j \in \bar{C}_\ell} (\f{j}{t+1}(\x{j}) + \theta \Delta)
        = \avg_{j \in \bar{C}_\ell} \f{j}{t+1}(\x{j}) + \theta \Delta
    \]
    Summing over all $i \in \bar{C}_\ell$, we get 
    \[
        \sum_{i \in \bar{C}_\ell}(\avg_{j \in \bar{C}_\ell} \f{i}{t+1}(\x{j}) + \theta \Delta)
        = \sum_{j \in \bar{C}_\ell} (\f{j}{t+1}(\x{j}) + \theta \Delta)
    \]
    By definition, for each $i \in C_k$ we have $\x{i} = \vc_k$.
    So,
    \[
        \sum_{\ell \in [K]} \sum_{i \in \bar{C}_\ell} \f{i}{t+1}(\bar{\vc}_\ell)
        \le \sum_{k \in [K]} \sum_{i \in C_k} \f{i}{t+1}(\vc_k)
    \]
    Hence, the sum of the first terms in \Cref{eq:split} over all $i$ can be bounded as
    \begin{align*}
        \sum_{k \in [K]} \sum_{i \in C_k} (\f{i}{t+1}(\vc_k) - \f{i}{t+1}(\vc_k^*) + \theta \Delta)
        &\le \sum_{k \in [K]} \sum_{i \in C_k} (\f{i}{t}(\vc_k) - \f{i}{t}(\vc_k^*) + \theta (\Delta + 2\delta)),
    \end{align*}
    where we used Assumptions~\ref{ass:representation} and~\ref{ass:drift} to bound the change of each objective after the data drift as $\theta \delta$.

    \paragraph{Bounding the second term}
    Let $\vx_{t}^{(i,*)}$ be the minimizer of $\f{i}{t}$ and $\vx_{t+1}^{(i,*)}$ be the minimizer of $\f{i}{t+1}$.
    Then, by Assumptions~\ref{ass:representation} and~\ref{ass:clustering} we have
    \begin{align*}
        \f{i}{t+1}(\bar{\vc}_{\ell}^*)
        &\ge \f{i}{t+1}(\vx_{t+1}^{(i,*)}) && \text{($\vx_{t+1}^{(i,*)}$ is the minimizer of $\f{i}{t+1}$)}\\
        &\ge \f{i}{t}(\vx_{t+1}^{(i,*)}) - \theta \delta && \text{(Assumptions~\ref{ass:representation} and~\ref{ass:drift})} \\
        &\ge \f{i}{t}(\vx_{t}^{(i,*)}) - \theta \delta && \text{($\vx_{t}^{(i,*)}$ is the minimizer of $\f{i}{t}$)}\\
        &\ge \f{i}{t}(\vc_{k}^*) - \theta (\Delta + \delta),
    \end{align*}
    \sloppy
    where the last inequality holds since otherwise $\f{i}{t}(\vc_{k}^*) > \f{i}{t}(\vx_{t}^{(i,*)}) + \theta \Delta$,
    which is impossible since $\vc_{k}^*$ is the minimizer of $\avg_{j \in C_k} \f{j}{t}$ and $\avg_{j \in C_k} \f{j}{t}(\vx_{t}^{(i,*)}) \le \f{i}{t}(\vx_{t}^{(i,*)}) + \theta \Delta$ by Assumptions~\ref{ass:representation}.

    Finally, we have $|\f{i}{t}(\vc_{k}^*) - \f{i}{t + 1}(\vc_{k}^*)| < \theta \delta$ by Assumptions~\ref{ass:representation} and~\ref{ass:drift}.
    Combining the bounds concludes the proof.
\end{proof}


%
Combining the above statements leads to our main convergence result.
\begin{theorem}
    Let $N$ be the number of clients, and let their objective functions satisfy Assumptions~\ref{ass:obj}-\ref{ass:representation}.
    Let $M$ be the total number of machines sampled per round.
    Let $\vx^*$ be the minimizer of $f_0 = \avg_i \f{i}{0}$.
    Let $\C{k,*}{t}$ be the minimizer for cluster $k$ at data drift event $t$.
    Then, for $\step \le 1/L$, for any data drift event $t$ of \Cref{alg:clustered_sgd} we have
    \begin{align*}
        \frac{1}{N} \sum_{k \in [K]} \sum_{i \in C_k} \pars{\f{i}{t}(\C{k}{t + 1}) - \f{i}{t}(\C{k,*}{t})}
        &\le (1 - \step \mu)^{TR} \pars{f(\x{0}) - f(\vx^*)} \\
        &\quad + \frac{L \step}{2\mu} \pars{\frac{\sigma^2 + 8 L \theta \Delta}{M/K}+ 3 \theta (\Delta + \delta) (1 - \step \mu)^{R}}
    \end{align*}
\end{theorem}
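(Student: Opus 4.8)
The plan is to fold the two lemmas already proved into a single recursion across data drift events and then unroll it. For event $t$, write $F_{k,t} = \avg_{i \in C_k} \f{i}{t}$ for the objective that the inner loop of \Cref{alg:clustered_sgd} actually minimizes inside cluster $k$, and let
\[
    \Psi_t = \frac{1}{N} \sum_{k \in [K]} \sum_{i \in C_k} \pars{\f{i}{t}(\C{k}{t+1}) - \f{i}{t}(\C{k,*}{t})}
\]
be the quantity on the left-hand side of the theorem; let $A_t$ denote the same average evaluated at the freshly averaged models $\tC{k}{0}$, i.e.\ the suboptimality at the start of the inner loop, before any SGD steps. I would first check that the sampled gradient $\avg_{i \in S} \G{i}{t}$ is an unbiased estimate of $\nabla F_{k,t}$, and that $F_{k,t}$ satisfies the $L$-smoothness and $\mu$-PL requirements of \Cref{lem:standard_convergence} (smoothness is immediate for averages, and the $\mu$-PL/strong-convexity condition is preserved under averaging of the convex clients, convexity being the property already used inside the proof of \Cref{lem:with_clustering}). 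This justifies invoking \Cref{lem:standard_convergence} on each cluster over its $R$ inner rounds.

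The first technical step is to determine the effective variance $V$ of the cluster gradient estimator. Sampling $M/K$ clients with replacement and using \Cref{ass:sg}, I would decompose
\[
    \Exp{\SqrNorm{\avg_{i \in S} \G{i}{t}(\vx) - \nabla F_{k,t}(\vx)}} \le \frac{1}{M/K} \pars{\sigma^2 + \avg_{i \in C_k} \SqrNorm{\nabla \f{i}{t}(\vx) - \nabla F_{k,t}(\vx)}}
\]
into the oracle variance and the intra-cluster gradient heterogeneity. I would control the heterogeneity at the cluster optimum $\C{k,*}{t}$, where $\nabla F_{k,t}$ vanishes, using the smoothness bound $\SqrNorm{\nabla \f{i}{t}(\C{k,*}{t})} \le 2L \pars{\f{i}{t}(\C{k,*}{t}) - \min \f{i}{t}}$ together with the estimate $\f{i}{t}(\C{k,*}{t}) - \min \f{i}{t} \le 2\theta\Delta$, which is exactly the chain of inequalities already established in the proof of \Cref{lem:with_clustering} from \Cref{ass:representation} and \Cref{ass:clustering}. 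After the $\SqrNorm{a+b} \le 2\SqrNorm{a} + 2\SqrNorm{b}$ split this yields $V \le \frac{\sigma^2 + 8 L \theta \Delta}{M/K}$. I expect this to be the main obstacle: the heterogeneity is genuinely point-dependent, so controlling it at the iterates rather than only at $\C{k,*}{t}$ forces the residual $\SqrNorm{\vx - \C{k,*}{t}}$ terms to be reabsorbed into the descent inequality, which is where the clustering assumption does its real work.

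With $V$ fixed, the finite-horizon form of \Cref{lem:standard_convergence} over $R$ inner rounds gives, for each event, $\Psi_t \le \gamma A_t + \frac{L\step}{2\mu} V \pars{1 - \gamma}$ with $\gamma = \pars{1 - \step\mu}^R$; the factor $\pars{1-\gamma}$ appears because only finitely many rounds are run, coming from $\sum_{j=0}^{R-1}(1-\step\mu)^j$ rather than the infinite sum used in the lemma's statement. \Cref{lem:with_clustering} then supplies the cross-event link $A_t \le \Psi_{t-1} + 3\theta(\Delta + \delta)$, while for the base case the common initialization $\tC{k}{0} = \x{0}$ and optimality of each $\C{k,*}{0}$ give $A_0 \le f_0(\x{0}) - f_0(\vx^*)$. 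Substituting produces the recursion $\Psi_t \le \gamma \pars{\Psi_{t-1} + 3\theta(\Delta+\delta)} + \frac{L\step}{2\mu} V (1 - \gamma)$.

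Finally I would unroll this geometric recursion over the $t$ events. The initial term is contracted by the $tR$ inner SGD rounds elapsed, giving the leading factor $(1 - \step\mu)^{tR}(f_0(\x{0}) - f_0(\vx^*))$. For the noise, the accumulated contribution is $\frac{L\step}{2\mu} V (1-\gamma) \sum_{j \ge 0} \gamma^j \le \frac{L\step}{2\mu} V = \frac{L\step}{2\mu} \cdot \frac{\sigma^2 + 8 L \theta\Delta}{M/K}$: crucially, the $(1 - \gamma)$ from the finite inner horizon cancels the $\frac{1}{1-\gamma}$ produced by summing the geometric series across events, so the stochastic floor does not grow with $t$. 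The per-event reclustering cost $3\theta(\Delta+\delta)$, weighted by the contraction $\gamma = (1-\step\mu)^R$ it undergoes before being measured, collects into the remaining non-vanishing term. The delicate part of the whole argument is precisely this bookkeeping: ensuring that the additive perturbations supplied by \Cref{lem:with_clustering} at every drift event interact with the exponential contraction so as to leave a bounded floor rather than a term that accumulates with the number of events.
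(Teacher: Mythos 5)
Your proposal follows the paper's proof essentially step for step: the same per-cluster application of \Cref{lem:standard_convergence} with effective variance $\frac{\sigma^2 + 8L\theta\Delta}{M/K}$, the same use of \Cref{lem:with_clustering} to link consecutive drift events with an additive $3\theta(\Delta+\delta)$ penalty, and the same unrolling of the resulting geometric recursion (your explicit observation that the $(1-\gamma)$ from the finite inner horizon cancels the $\frac{1}{1-\gamma}$ from summing across events is, if anything, a cleaner account of the bookkeeping than the paper gives). The one place you diverge is the intra-cluster gradient heterogeneity: you bound it only at $\C{k,*}{t}$ and flag the point-dependence as the main obstacle, anticipating that residual $\SqrNorm{\vx - \C{k,*}{t}}$ terms must be reabsorbed into the descent inequality. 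The paper sidesteps this entirely by applying the smoothness-implies-small-gradient inequality to the \emph{difference} $h = \f{i}{t} - \f{j}{t}$ of two clients in the same cluster: $h$ is $2L$-smooth and its range is at most $2\theta\Delta$ everywhere by Assumptions~\ref{ass:representation} and~\ref{ass:clustering}, so $\norm{\nabla \f{i}{t}(\vx) - \nabla \f{j}{t}(\vx)} \le \sqrt{8L\theta\Delta}$ uniformly in $\vx$, not just at the optimum. This gives a point-independent variance bound for client sampling and makes your anticipated absorption step unnecessary; with that substitution your argument closes exactly as the paper's does.
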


\begin{proof}
    First, we express the objective in terms of objectives for individual clusters:
    \begin{align*}
        \frac{1}{N} \sum_{k \in [K]} \sum_{i \in C_k} \pars{\f{i}{t}(\C{k}{t}) - \f{i}{t}(\C{k,*}{t})}
        &= \frac{1}{N} \sum_{k \in [K]} |C_k| \avg_{i \in C_k} \pars{\f{i}{t}(\C{k}{t}) - \f{i}{t}(\C{k,*}{t})}
    \end{align*}
    We then analyze the convergence in each cluster.
    \sloppy
    For any $L$-smooth function $h$ with minimizer $\vx^*$, it holds that $\|\nabla h(\vx)\| \le \sqrt{2 L (h(\vx) - h(\vx^*))}$.    
    First, note that, inside each cluster, the objective functions differ by at most $\theta \Delta$.
    Hence, for any two clients $i$ and $j$ from the same cluster, defining $h = \f{i}{t} - \f{j}{t}$, by $2L$-smoothness of $h$, for any $\vx$ we have
    \[
        \|\nabla h(\vx)\|
        \le \sqrt{4 L (h(\vx) - h(\vx^*))}
        \le \sqrt{8 L \theta \Delta}
    \]
    Hence, sampling clients uniformly from the cluster introduces variance at most $8 L \theta \Delta$.
    Since we sample $M/K$ clients from a cluster, the total variance~-- including the stochastic variance~-- is at most $\frac{\sigma^2 + 8 L \theta \Delta}{M/K}$.

    Next, by \Cref{lem:with_clustering}, the total objective increases by at most $3 \theta (\Delta + \delta)$ after reclustering.
    Since during $R$ rounds the non-stochastic part of each objective decreases by a factor of $(1 - \step \mu)^R$, we have
    \begin{align*}
        &\frac{1}{N} \sum_{k \in [K]} |C_k| \avg_{i \in C_k} \pars{\f{i}{t+1}(\C{k}{t+1}) - \f{i}{t+1}(\C{k,*}{t+1})} \\
        &\le \frac{(1 - \step \mu)^R}{N} \sum_{k \in [K]} |C_k| \avg_{i \in C_k} \pars{\f{i}{t}(\C{k}{t}) - \f{i}{t}(\C{k,*}{t}) + 3 \theta (\Delta + \delta)}
        + \sum_{i=0}^{R} \pars{1 - \step \mu}^i \cdot \frac{\sigma^2 + 8 L \theta \Delta}{M/K} \\
        &= (1 - \step \mu)^R 3 \theta (\Delta + \delta) + \sum_{i=0}^{R} \pars{1 - \step \mu}^i \cdot \frac{\sigma^2 + 8 L \theta \Delta}{M/K}  \\
        &\qquad + (1 - \step \mu)^R\frac{1}{N} \sum_{k \in [K]} |C_k| \avg_{i \in C_k} \pars{\f{i}{t}(\C{k}{t}) - \f{i}{t}(\C{k,*}{t})}
    \end{align*}
     By \Cref{lem:standard_convergence}, the last (stochastic) term accumulates over all data drift events as $\frac{L \step}{2 \mu} \cdot \frac{\sigma^2 + 8 L \theta \Delta}{M/K}$.
     By a similar reasoning, $3 \theta (\Delta + \delta)$ accumulates as $\frac{L \step}{2 \mu} \cdot 3 \theta (\Delta + \delta) (1 - \step \mu)^R$.
     And finally, $f(\x{0}) - f(\vx^*)$ term is multiplied by $(1 - \step \mu)^R$ at every data drift event, giving factor of $(1 - \step \mu)^{TR}$ over $T$ data drift events.
\end{proof}



\section{Additional background}
\label{sec:motivation}

    In this section, we provide the background of clustered federated learning and quantify the impacts of data drift. 

\subsection{Clustered federated learning}
\label{subsec:CFL}
FL deployment often involves hundreds to thousands of available clients participating in the training~\citep{google_fl}. For instance, \citet{fl_medicalrecords} have developed mortality and stay time predictors for the eICU collaborative research database~\citep{eicu} containing data of 208 hospitals and more than 200,000 patients. Although thousands of devices may be available in a given round, FL systems typically select only a subset for training. For example, Google sets 100 as the target number of clients per training round when improving keyboard search suggestions~\citep{googlekeyboard}. This selection process ensures a reasonable training time and accounts for diminishing returns, where including more clients does not accelerate convergence~\citep{fwdllm}.

Data heterogeneity is a major challenge in FL, as variations in local data volume and distribution among participants can lead to slow convergence and reduced model accuracy~\citep{characterizehetero, flsurvey}.
Clustering is an effective strategy to mitigate the impact of heterogeneity by grouping statistically similar clients. Clients within the same cluster collaboratively train one cluster model, which then serves their inference requests. Prior works demonstrate that clustering achieve high accuracy and fast model convergence~\citep{CFL_modelagnostic, personal_fl_approaches, ifca, auxo, federateddistributeddrift}.

Our experiments with the \sysname system prototype mainly use label distribution vectors as the client representation, as they handle both label and covariate shifts and are lightweight. Although previous studies have examined label distribution vectors as leverage to address heterogeneity~\citep{fedlabcluster, fedlc_zhang, fedlc_lee_seo, lee2021robust}, \sysname distinguishes itself from existing works. Federated learning via Logits
Calibration~\citep{fedlc_zhang} and FedLC~\citep{fedlc_lee_seo} are orthogonal and investigate building a better global model through client model aggregation and loss function improvements. FedLabCluster~\citep{fedlabcluster} and \citet{lee2021robust} do not explicitly discuss data drift handling.
Moreover, we present a theoretical framework with a convergence guarantee under data drift and re-clustering.


\subsection{Data drift affects static clustering} 


Drifts happen naturally when clients have access to non-stationary streaming data (e.g., cameras on vehicles and virtual keyboards on phones)~\citep{wilds, ekya, asynchronousfl, flmultimodalemotion, fldatastream}. For example, when training an FL model for keyboard suggestions, drifts occur when many clients simultaneously start searching for a recent event (widespread drift), a few clients pick up a niche hobby (concentrated intense drift), or a new term appears when a product is launched (new label added). 


Data drift reduces the effectiveness of clustered FL as it increases data heterogeneity within clusters. 
We demonstrate this problem with the Functional Map of the World (FMoW) dataset~\citep{fmow}. Recall from \Cref{sec:evaluation} that our preprocessed FmoW dataset contains time-stamped satellite images labeled taken on or after January 1, 2015, and each unique UTM zone metadata value corresponds to one client (302 clients in total). A day's worth of data becomes available every two training rounds, and clients maintain images they received over the last 100 rounds. 

As mentioned in \Cref{sec:design_decisions}, we use {\em mean client distance} as the intra-cluster heterogeneity. 
Note that instead of first averaging within each cluster and then finding the mean of those cluster averages, we use the overall mean across all clients. This helps us avoid biased results arising from imbalanced cluster sizes.
Note that in the baseline case without any clustering, we consider all clients as members of a "global cluster".

We cluster clients based on their label distribution vectors at the starting round and track the overall heterogeneity by measuring the mean client distance described above.
Figure~\ref{fig:fmow_intra_cluster_hetero} shows that initially (e.g., round 1), clustering reduces the per-cluster heterogeneity compared with no clustering (i.e., putting all clients in a global set). However, as data distribution shifts over time, static clustering increases per-cluster heterogeneity and soon gets close to the no clustering case at round 322.

\section{Implementation}
\label{implementation}

\begin{figure}[t]
    \centering 
    \includegraphics[width=0.7\linewidth, trim={0.53cm 1cm 0.27cm 0.1cm},clip]{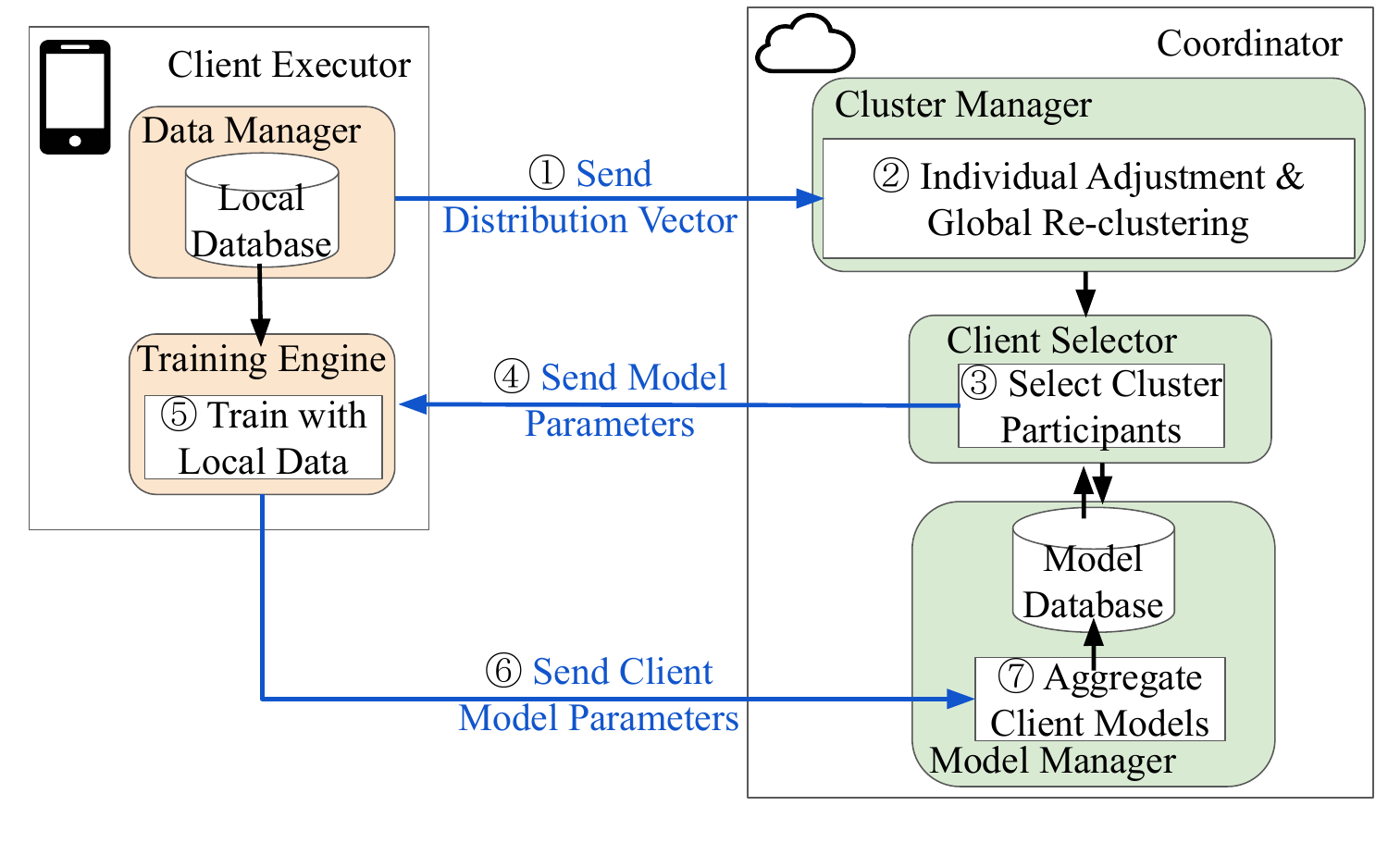}
    \caption{The system architecture and workflow of \sysname. Blue arrows represent communication between clients and the coordinator. Black arrows indicate the workflow among different components on the same machine.
    } 
    \label{fig:workflow}

\end{figure}

We implemented \sysname in Python on top of FedScale~\citep{fedscale}, a state-of-the-art open-source FL engine. We follow FedScale's design of having one centralized coordinator and client-specific executors. Figure~\ref{fig:workflow} illustrates the end-to-end workflow of the \sysname system prototype with label distribution vectors as client representations. At the beginning of each training round, \circled{1} client executors register with the coordinator to participate and report their local data distribution optionally. The coordinator's cluster manager maintains clients' distribution records and \circled{2} moves clients to the closest cluster when their reported distribution drifts. After moving drifted clients individually, the cluster manager measures center shift distances and either 
triggers global re-clustering when any cluster center shifts significantly or 
finalizes clusters membership. In the case of global clustering, we use $K$-means clustering and determine $K$ using the silhouette method~\citep{silhouettes}. The initial model of each newly created cluster is set as the average of its clients' previous cluster model (see \Cref{alg:data_drift}).

When client clustering is done, the coordinator notifies the client selector to \circled{3} select a subset of clients to contribute to each cluster and \circled{4} communicate the latest model parameters to the selected clients. \circled{5} Upon receiving a set of parameters, client executors conduct the training process over local data and \circled{6} upload the updated model parameters to the coordinator. Finally, \circled{7} the model manager aggregates individual models and stores the new cluster models. This process happens iteratively until the clients' mean test accuracy reaches a target value. \sysname also regularly creates checkpoints for the models, clients' metadata, and cluster memberships for future fine-tuning and failure recovery. \sysname's overheads are mainly determined by the number and size of distribution vectors we need to re-cluster. In our largest evaluation setting where we train with 5078 clients on a dataset with 100 labels, per-client adjustment takes 2.0 seconds and global re-clustering takes 15.6 seconds on average. Storing the latest reported distribution vector for each client consumes $5078\times100\times4$B $\approx 1.9$MB of memory.

\section{Experimental settings}
\label{app:eval_settings}
We conduct experiments on public datasets Functional Map of the World (FMoW)~\citep{fmow}, Cityscapes~\citep{cityscapes}, Waymo Open~\citep{waymo}, and Open Images~\citep{openimages}. In Table~\ref{table:dataset_setup}, we list the total number of clients we create on each dataset and the number of rounds in between clients getting new samples (e.g., as mentioned in Section~\ref{sec:evaluation}, on Cityscapes we partition each client's data into 10 intervals and stream in one interval every 30 rounds; so the rounds between new data arrivals are 30). In Table~\ref{table:training_parameter}, we specify the training parameters including the total number of training rounds, learning rate, batch size, number of local steps, and the total number of clients selected to contribute in each round.
\begin{table}[h]
\caption{Dataset configurations.}
\label{table:dataset_setup}
\vskip 0.15in
\begin{center}
\begin{small}
\begin{sc}
\begin{tabular}{P{0.2 \linewidth}P{0.25 \linewidth}P{0.3 \linewidth}}
\toprule
Dataset & Number of Clients & Rounds Between New Data Arrivals\\
\midrule
FMoW & 302 & 2\\
CityScapes  & 217 & 30\\
Waymo Open  & 212 & 20\\
Open Images & 5078 & 50\\
\bottomrule
\end{tabular}
\end{sc}
\end{small}
\end{center}
\vskip -0.1in
\end{table}

\begin{table}[h]
\caption{Training parameters.}
\label{table:training_parameter}
\vskip 0.15in
\begin{center}
\begin{small}
\begin{sc}
\begin{tabular}{P{0.18 \linewidth}P{0.08 \linewidth}P{0.1 \linewidth}P{0.05 \linewidth}P{0.18 \linewidth}P{0.16 \linewidth}}
\toprule
Dataset & Total Rounds & Learning Rate & Batch Size & Number of Local Steps & Participants per Round\\
\midrule
FMoW & 2000 & 0.05 & 20 & 20 & 50\\
CityScapes & 200 & 0.001 & 20 & 20 & 20\\
Waymo Open & 100 & 0.001 & 20 & 20 & 20\\
Open Images & 400 & 0.05 & 20 & 20 & 200\\
\bottomrule
\end{tabular}
\end{sc}
\end{small}
\end{center}
\vskip -0.1in
\end{table}

The URL, version information, and license of datasets we used are as follows:
\begin{itemize}
    \item FMoW: \url{s3://spacenet-dataset/Hosted-Datasets/fmow/fmow-rgb/}. fMoW-rgb version. This data is licensed under the \href{https://github.com/fMoW/dataset/raw/master/LICENSE}{Functional Map of the World Challenge Public License}.
    \item Cityscapes: \url{https://www.cityscapes-dataset.com/file-handling/?packageID=3} and \url{https://www.cityscapes-dataset.com/file-handling/?packageID=1}. Fine annotation version (5000 frames in total). The dataset is released under \href{https://www.cityscapes-dataset.com/license/}{Cityscapes' custom terms and conditions}.
    \item Waymo Open: \url{https://console.cloud.google.com/storage/browser/waymo_open_dataset_v_1_0_0}. Perception Dataset v1.0, August 2019: Initial release. The dataset is released under \href{https://waymo.com/open/terms/}{Waymo Dataset License Agreement for Non-Commercial Use}.
    \item Open Images: \url{https://fedscale.eecs.umich.edu/dataset/openImage.tar.gz}. Open Images (V7) Preprocessed by FedScale. The original Open Images dataset annotations are licensed by Google LLC under CC BY 4.0 license.
\end{itemize}

\section{Representations comparison}
\label{app:representation}

\subsection{Gradients as representation}

\begin{table*}[t]
\caption{Heterogeneity of all clients and intra-cluster heterogeneity after clustering (lower is better) on FMoW at various rounds. The left value denotes the pairwise L1 distance of distribution vectors, while the right represents the pairwise embedding squared Euclidean distance. Early on, gradient-based clustering is less effective due to the model being unstable, whereas label-based clustering decreases heterogeneity consistently. Notably, smaller distribution distances generally align with smaller embedding distances, suggesting that optimizing the label-based clustering objective also generates good embedding-based clusters.}
\label{table:cluster_gradient}
\begin{tabular}{|l|ll|ll|ll|}
\hline
Total Rounds & \multicolumn{2}{l|}{Un-Clustered} & \multicolumn{2}{l|}{Gradient-Based Clustering} & \multicolumn{2}{l|}{Label-Based Clustering} \\ \hline
100          & 1.81            & 46.33           & 1.82(+0.6\%)          & 46.30(-0.06\%)         & 1.65(-8.8\%)         & 43.02(-7.14\%)       \\
200          & 1.80            & 42.75           & 1.82(+1.1\%)          & 42.53(-0.51\%)         & 1.64(-8.9\%)         & 41.52(-2.88\%)       \\
500          & 1.78            & 36.99           & 1.71(-3.9\%)          & 35.82(-3.16\%)         & 1.62(-9.0\%)         & 35.55(-3.89\%)       \\
1000         & 1.74            & 31.85           & 1.66(-4.6\%)          & 30.70(-3.61\%)         & 1.56(-10.3\%)        & 29.97(-5.90\%)       \\
1500         & 1.76            & 32.35           & 1.63(-7.4\%)          & 30.68(-5.16\%)         & 1.60(-9.1\%)         & 30.76(-4.91\%)       \\ \hline
\end{tabular}
\end{table*}

An issue with gradient-based re-clustering is that the clustering effectiveness is sensitive to the model quality. Table~\ref{table:cluster_gradient} presents the resulting average pairwise distribution vector L1 distance and embedding
squared Euclidean distance when we perform gradient-based clustering after training a global model 
for various numbers of rounds on FMoW. The numbers in parentheses indicate the change in average distance relative to that of the global set. A negative value in the parentheses indicates that the generated clusters are overall \textit{less} heterogeneous than the global set. When we perform gradient-based clustering with the global model at round 100 and round 200, the resulting clusters are as heterogeneous as the global set. As we postpone clustering to later rounds, gradient-based clustering achieves an increasingly larger reduction in mean client distance. This trend indicates that gradient-based clustering doesn't perform well in earlier rounds when the global model we use to collect gradients from all clients has not converged.

As mentioned in \Cref{sec:design_decisions}, we propose using label distribution vectors as client representations when we deal with label and covariate shifts. Label distribution vectors are available on all clients and enable us to promptly detect any drift by checking for distribution changes. Compared to gradient-based clustering, label-based doesn't demand parameter or gradient transmission, doesn't introduce additional computation tasks, and is not sensitive to the timing of client clustering. In Table~\ref{table:cluster_gradient}, label-based clustering provides consistent heterogeneity reduction across rounds. Furthermore, as a label distribution vector typically has tens or hundreds of coordinates compared to millions in a full gradient, label-based clustering incurs significantly lower computational overhead. In our experiment of training ResNet-18 on the FMoW dataset, clustering all 302 clients using our label-based solution takes only 0.05 seconds while the gradient-based solution FlexCFL takes 37.92 seconds (note that FlexCFL already accelerates this process through dimensionality reduction using truncated Singular Value Decomposition). 

\begin{figure}[t!]
    \begin{minipage}[t]{0.45\textwidth}
      \centering 
      \includegraphics[width=\textwidth]{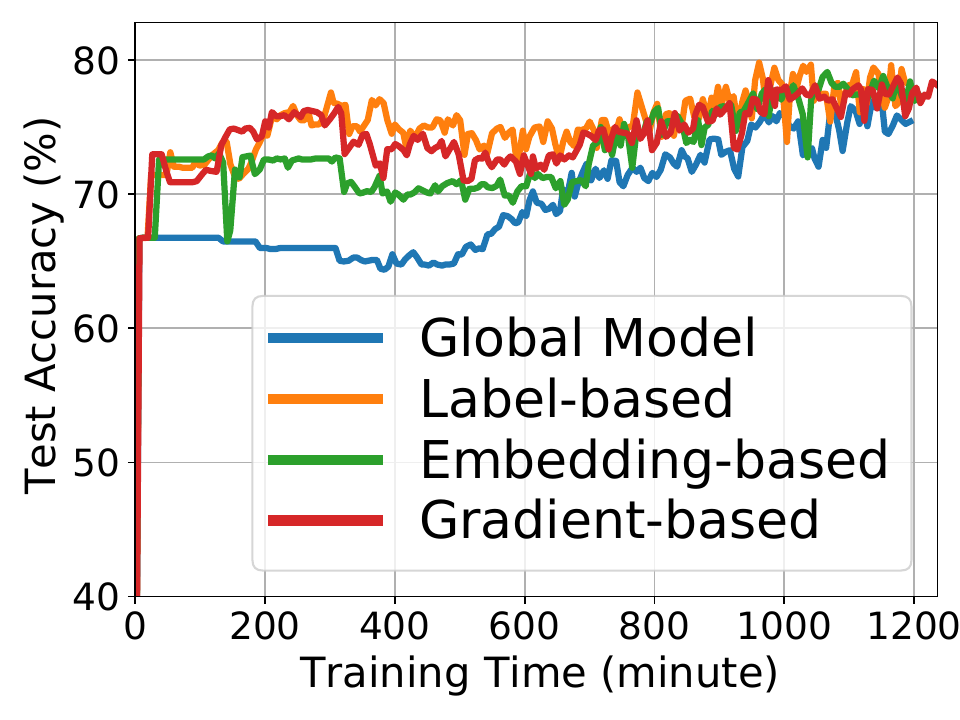} 
      \caption{\sysname achieves high accuracy across client representations on Cityscapes.}
      \label{fig:representation_compare} 
    \end{minipage}
    \hfill
    \begin{minipage}[t]{0.45\textwidth}
      \centering 
      \includegraphics[width=\textwidth]{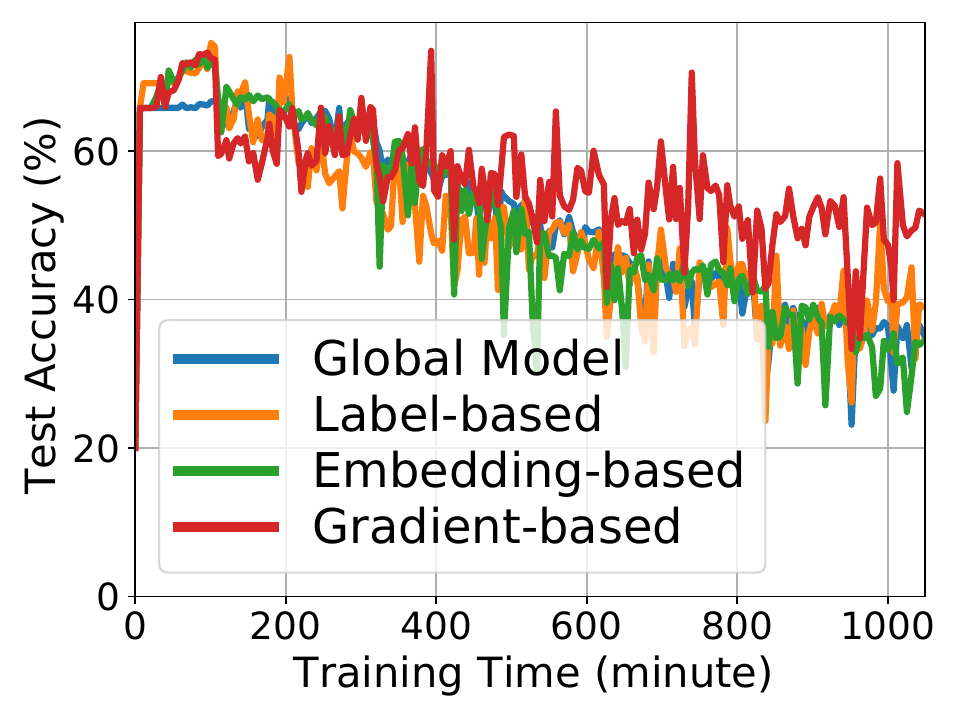}  
      \caption{Gradient-based clustering handles concept drift better.}
      \label{fig:concept_drift} 
    \end{minipage}
\end{figure}

\minghao{works better when the shared model used for gathering the embedding is not well-trained enough}
\minghao{higher overhead due to BP and larger volume of transmitted data (even assuming frozen model storing locally)}
\minghao{better for concept drift}

In terms of overhead, both embedding- and gradient-based clustering introduce client-side computation, which can be mitigated by deploying a smaller shared model for representation collection, eliminating extra model download time. We adopt this approach when we gather the time-to-accuracy results of training ResNet-18 on Cityscapes with various representations (shown in \Cref{fig:representation_compare}). We first construct a small training set by randomly sampling 200 images from each class. We then train a ResNet-18 model on this dataset for 300 epochs and broadcasts it so that clients store it locally and use it for gradient and embedding generation. Note that gradient-based \sysname still takes longer to finish 200 training rounds than other variants in \Cref{fig:representation_compare} due to the longer computational time of back propagation and longer transmission time of gradient vectors. 

Handling concept drifts where $P(y|x)$ changes requires loss-based representations, making gradients more appropriate than label distribution vectors or embeddings. \Cref{fig:concept_drift} shows the time-to-accuracy results with synthesized concept drifts on Cityscapes. Here we make all data samples available throughout the training, but introduce drift events by randomly choosing 50\% clients and having each chosen client randomly pick two labels and swap their samples (i.e., if a client picks label $A$ and $B$, then all samples previously labeled $A$ are now labeled $B$, and samples labeled $B$ now have label $A$). Gradient-based \sysname manages to retain test accuracy under such aggressive concept drift, while label and embedding-based \sysname have similar results as the baseline without any clustering. This result highlights that gradient has the potential of addressing concept drifts while label distribution and embedding don't. \minghao{concept drift is not our focus, might be able to improve the gradient based result further}

\subsection{Input embeddings as representations}
\minghao{comparable results. Embedding based slightly higher overhead due to FP}
Since covariate shift is defined as input distribution ($P(x)$) changing while the feature-to-label mapping ($P(y|x)$) remains constant, input embedding distance should be a proxy for label distribution distance. Hence, minimizing the distance between input embeddings naturally minimizes the label distribution distance as well. Table~\ref{table:cluster_gradient} supports this intuition by showing a correlation between the label distribution distance and the embedding distance. This observation suggests that using label distribution vectors or input embeddings as representations should have comparable performance on labeled datasets. As shown in \Cref{fig:representation_compare}, both label distribution vectors and embeddings enable \sysname to outperform the no-clustering baseline (the sudden accuracy drop of the embedding-based curve around 150 minutes is the result of global re-clustering). Input embeddings have the potential of capturing label and covariate shifts on unlabeled data, which label distribution vectors are incapable of.

\section{Additional results}
\label{app:other_results}
\begin{figure}[t]
      \centering 
      \includegraphics[width=0.6\textwidth, trim={0.38cm 0.4cm 0.38cm 0.38cm}, clip]{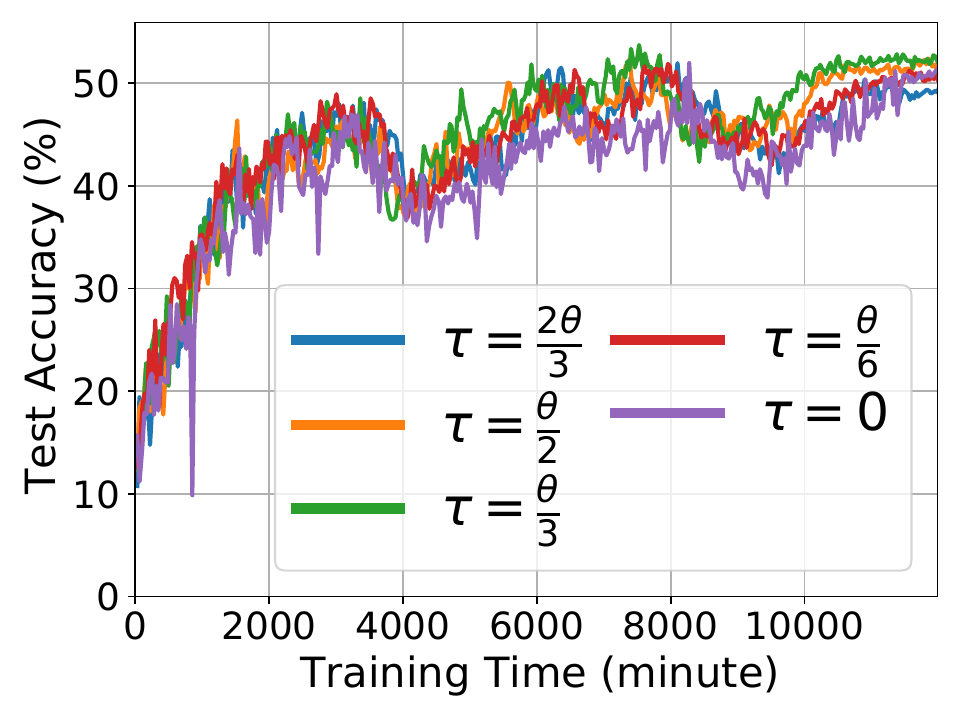}  
      \caption{\sysname performance with different global re-clustering threshold ($\tau$).}
      \label{fig:tau_ablation} 
\end{figure}
\subsection{Global clustering threshold ablation study}
A tunable parameter of our \sysname prototype is the global re-clustering threshold $\tau$. We present the results on FMoW with $\tau$ being 0 (i.e., global re-clustering by default), $\frac{\theta}{6}$, $\frac{\theta}{3}$, $\frac{\theta}{2}$, and $\frac{2\theta}{3}$ in Figure~\ref{fig:tau_ablation} (recall that $\theta$ is the average distance between cluster centers). $\tau$ equals 0 leads to sudden accuracy drops that match our observations in Figure~\ref{fig:global_vs_selective}. $\tau=\frac{\theta}{6}$ stabilizes the system, but the final accuracy is similar to that of $\tau=0$. The largest threshold $\tau=\frac{2\theta}{3}$ performs the worst, matching Figure~\ref{fig:fmow_intra_cluster_hetero} that migrating individual clients without adequate global re-clustering leads to more heterogeneous clusters. We choose $\tau=\frac{\theta}{3}$ for our prototype as it gives the highest final accuracy. However, we acknowledge that the optimal threshold might be workload-specific. To address this, we propose making $\tau$ learnable during training: in the early rounds, we explore various thresholds and select those yielding the best accuracy; we can also periodically re-learn the threshold in later rounds. This ensures good performance across different datasets.

\subsection{Alternative global clustering threshold using pairwise distance}
\label{app:reclustering_threshold}
\minghao{OK to place it here?}

To demonstrate that \sysname prototype performs empirically well with the alternate re-clustering thresholds presented in \Cref{app:convergence}, we re-run the Cityscapes task with re-clustering mechanism matching Line 9-11 of \Cref{alg:clustered_sgd}. In specific, we measure clients' pairwise distance as the L1 distance between their distribution vectors, and tune $\Delta$ adaptively. We start with $\Delta = c = 0.1$. After each data drift event, we check whether global re-clustering has been triggered consecutive by two drift events. If so, we update $\Delta$ to $2\times \Delta$; otherwise, we update $\Delta$ to $\min(c, \Delta-c)$. \Cref{fig:tau_global_thres} shows that the prototype of the \sysname system works well with both center shift distance-based and client pairwise distance-based global re-clustering triggering conditions. 


\begin{figure}[t!]
    \begin{minipage}[t]{0.45\textwidth}
      \centering 
      \includegraphics[width=\textwidth, trim={0.38cm 0.4cm 0.38cm 0.38cm}, clip]{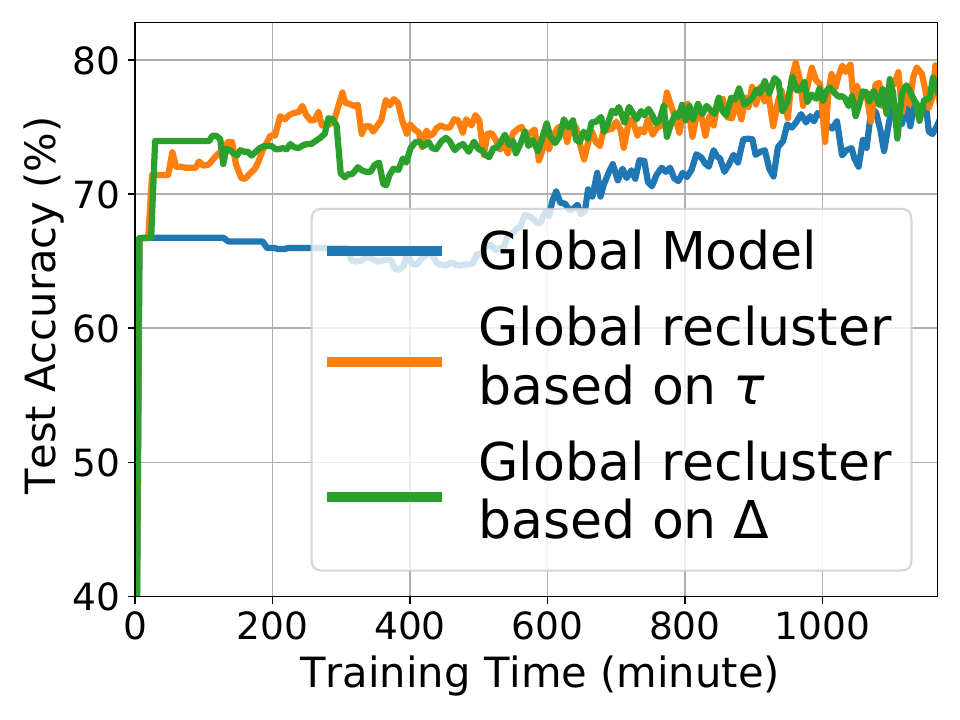}  
      \caption{\sysname performance with different global re-clustering triggering conditions (center shift distance as in \Cref{alg:data_drift} or client pairwise distance as in \Cref{app:convergence}).}
      \label{fig:tau_global_thres}
    \end{minipage}
    \hfill
    \begin{minipage}[t]{0.45\textwidth}
      \centering 
      \includegraphics[width=\textwidth, trim={0.38cm 0.4cm 0.38cm 0.38cm}, clip]{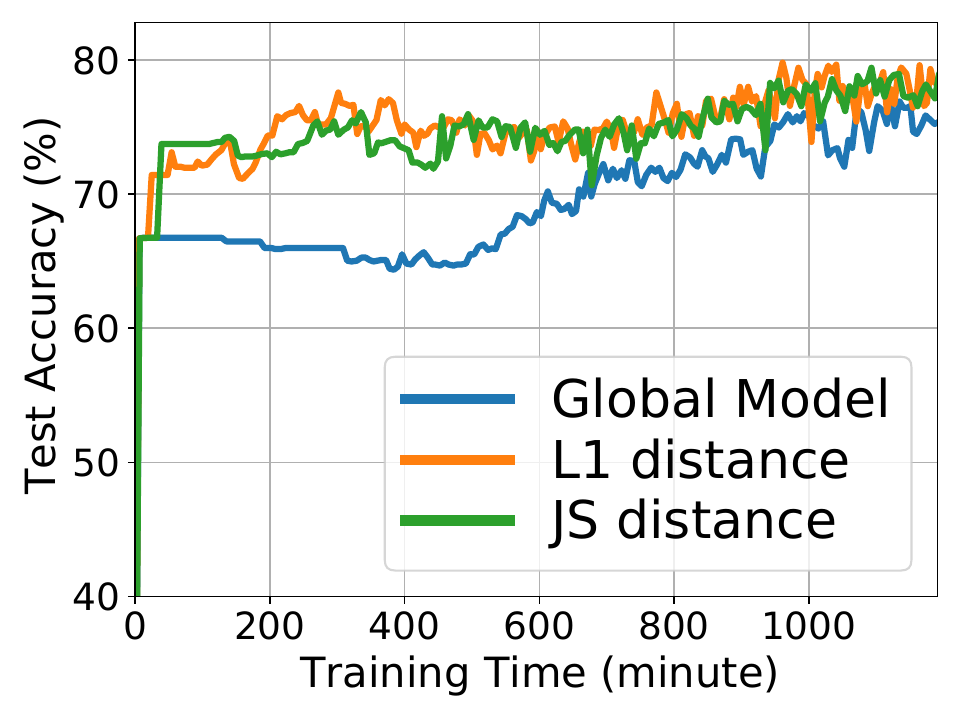}  
      \caption{\sysname performance with different distance metrics.}
      \label{fig:label_distance_metric} 
    \end{minipage}
\end{figure}

\subsection{Supporting diverse distance metric}
\minghao{add JS-distance results here}
To demonstrate that \sysname is not tied to specific distance metric, we run the Cityscapes task with label distribution vectors as client representations, and both L1 and Jensen–Shannon distance~\citep{jsdistance} as the distance metric. \Cref{fig:label_distance_metric} shows that \sysname is compatible with different distance metrics chosen for a given client representation.



\end{document}